\documentclass{article}

\PassOptionsToPackage{sort&compress,numbers}{natbib}
\usepackage[letterpaper,left=1in,right=1in,top=1in,bottom=1in]{geometry}
\usepackage[utf8]{inputenc}
\usepackage[T1]{fontenc}
\usepackage[dvipsnames,table]{xcolor}
\usepackage{graphicx}
\usepackage{url}
\usepackage{booktabs}
\usepackage{amsfonts}
\usepackage{amssymb}
\usepackage{amsmath}
\usepackage{mathtools}
\usepackage{amsthm}
\usepackage{bm}
\usepackage{nicefrac}
\usepackage{microtype}
\usepackage{xspace}
\usepackage{multirow}
\usepackage{adjustbox}
\usepackage{array}
\usepackage{wrapfig}
\usepackage{algorithm}
\usepackage{algorithmic}
\usepackage[shortlabels,inline]{enumitem}
\usepackage{natbib}
\usepackage[pagebackref,breaklinks,colorlinks,citecolor=blue,linkcolor=blue,urlcolor=black]{hyperref}
\usepackage[capitalize,noabbrev]{cleveref}

\renewcommand{\algorithmicrequire}{\textbf{Input:}}
\renewcommand{\algorithmicensure}{\textbf{Output:}}

\usepackage{symbols}

\usepackage{amsmath,amsfonts,bm}

\def\1{\bm{1}}

\def\sp{space}

\def\ve{{\bm{e}}}

\def\vg{{\bm{g}}}

\def\vu{{\bm{u}}}
\def\vv{{\bm{v}}}
\def\vw{{\bm{w}}}

\def\mG{{\bm{G}}}

\def\mI{{\bm{I}}}

\def\mM{{\bm{M}}}

\def\mO{{\bm{O}}}
\def\mP{{\bm{P}}}

\def\mU{{\bm{U}}}
\def\mV{{\bm{V}}}
\def\mW{{\bm{W}}}

\DeclareMathAlphabet{\mathsfit}{\encodingdefault}{\sfdefault}{m}{sl}
\SetMathAlphabet{\mathsfit}{bold}{\encodingdefault}{\sfdefault}{bx}{n}

\newcommand{\R}{\mathbb{R}}

\DeclareMathOperator*{\argmin}{arg\,min}

\DeclareMathOperator{\sign}{sign}
\DeclareMathOperator{\msign}{msign}
\DeclareMathOperator{\diag}{diag}

\newcommand{\pa}[1]{\left( #1 \right)}
\newcommand{\bracks}[1]{\left[ #1 \right]}
\newcommand{\set}[1]{\left\{ #1 \right\}}
\newcommand{\inner}[2]{\left\langle #1,\, #2 \right\rangle}
\newcommand{\abs}[1]{\left\lvert #1 \right\rvert}

\usepackage{thmtools}
\usepackage{thm-restate}
\theoremstyle{plain}
\newtheorem{theorem}{Theorem}[section]
\newtheorem{proposition}[theorem]{Proposition}
\newtheorem{lemma}[theorem]{Lemma}

\theoremstyle{definition}

\newtheorem{assumption}[theorem]{Assumption}
\theoremstyle{remark}

\makeatletter
\DeclareRobustCommand{\pdot}{\mathbin{\mathpalette\pdot@\relax}}
\newcommand{\pdot@}[2]{\ooalign{$\m@th#1\circ$\cr
    \hidewidth$\m@th#1\cdot$\hidewidth\cr
  }}
\makeatother

\definecolor{darkred}{rgb}{0.7,0.1,0.1}
\definecolor{darkgreen}{rgb}{0.1,0.7,0.1}
\definecolor{dblue}{rgb}{0.2,0.2,0.8}
\definecolor{maroon}{rgb}{0.76,.13,.28}
\definecolor{burntorange}{rgb}{0.81,.33,0}
\definecolor{tealblue}{rgb}{0.212,0.459, 0.533}
\definecolor{mint}{rgb}{0.24, 0.71, 0.54}
\definecolor{mypink}{rgb}{0.93359375, 0.62109375, 0.83984375}

\definecolor{pp}{rgb}{0.43921569, 0.18823529, 0.62745098}
\definecolor{rr}{rgb}{0.5254902 , 0.00784314, 0.12941176}
\definecolor{bb}{rgb}{0.09019608, 0.23529412, 0.37647059}
\definecolor{yy}{rgb}{0.49803922, 0.3372549 , 0.0}
\definecolor{gg}{rgb}{0.02352941, 0.3372549 , 0.17647059}

\usepackage{thmtools}
\usepackage{thm-restate}
\usepackage{csquotes}

\definecolor{fail}{RGB}{236,120,115}
\definecolor{succ}{RGB}{119,205,255}
\newcommand{\myparagraph}[1]{\vspace*{0pt}{\bfseries\noindent #1}}

\title{\rule{\linewidth}{1.5pt}\\ \vspace{3pt}\textbf{Spectral Saliency for Machine Unlearning}\vspace{3pt}\\\rule[8pt]{\linewidth}{1pt}}
\author{
\normalsize
\textbf{Cedar Site~Bai}\thanks{Equal contribution},
\textbf{Amber Yijia~Zheng}$^{\ast}$,
\textbf{Raymond A.~Yeh},
\textbf{Brian~Bullins}\\
Department of Computer Science\\
Purdue University\\
\small\texttt{\{bai123, zheng709, rayyeh, bbullins\}@purdue.edu}
}
\date{}

\begin{document}

\maketitle

\begin{abstract}
   Machine unlearning (MU) aims to remove the influence of specific training data while preserving model utility. As the name suggests, MU can be viewed as the inverse of learning, using gradient-based updates to reduce the influence of a forget-set by counteracting the previously learned behavior. Recently, Muon, a gradient descent variant, has been introduced. Muon applies spectral magnitude normalization to encourage exploration of rare directions and demonstrates promising performance. Inspired by Muon, we adopt the spectral view for unlearning and propose Spectral Saliency Unlearning (SSU). SSU thresholds weak singular components and updates only those directions supported by a confident unlearning signal. We further provide theoretical justification for this thresholding approach from the perspective of the forgetting-retention trade-off. Experiments across image classifiers, diffusion models, and LLMs demonstrate SSU's effectiveness.
\end{abstract}
\section{Introduction}

Machine unlearning (MU) studies how to remove the influence of specific training data from a trained model without resorting to full retraining~\citep{cao2015towards}. There is an increasing need for MU to satisfy data-governance requirements, such as the General Data
Protection Regulation (GDPR)~\citep{EuropeanParliament2016a}. 
Recent methods focus on approximate unlearning, which aims to efficiently reduce the influence of the forget set through a few gradient-based updates~\citep{fan2024salun, thudi2022unrolling, izzo2021approximate}. These methods typically optimize a forgetting objective that increases the model’s loss on the forget set or pushes a model's predictions toward misclassifications or randomization.

In this work, 
we conceptualize MU as the partial ``reversal'' of the optimization process.
In the deep learning era, learning can be viewed as using stochastic gradient descent (SGD) to incrementally accumulate the influence of training examples into the model parameters. Correspondingly, approximate unlearning can be viewed as undoing the effect of gradient descent on the designated forget set. UnrollingSGD~\citep{thudi2022unrolling} provides one concrete instantiation, analyzing unlearning through the lens of the SGD training dynamics. Motivated by the central role of SGD in deep learning, it is natural to ask how variants of SGD that alter the geometry and scaling of updates translate into the design of unlearning updates. 

A canonical variant is SignSGD~\cite{bernstein2018signsgd}, which replaces the stochastic gradient $\vg$ with its coordinate-wise sign,~\ie, $\sign\pa{\vg}$, thereby equalizing update magnitudes across coordinates and emphasizing directional information over scale. For matrix-structured parameters~\citep{he2016deep, vaswani2017attention}, recent optimizers such as Muon~\citep{jordan2024muon} extend this magnitude normalization beyond coordinates by applying the matrix sign in the spectral domain of layer gradients. 
This design is motivated in part by the idea that exploring ``rare'', weakly expressed directions can be beneficial for learning~\citep{jordan2024muon}. The approach is based on analyses of deep learning dynamics showing that training progresses along singular directions in order of singular value magnitude. Smaller singular values associated with weaker and more specific structure are learned later~\citep{kleinman2024critical,rahaman2019spectral,lampinen2018an}. 

From this perspective, we view the recent unlearning method SalUn~\citep{fan2024salun} as an analogue of SignSGD. Instead of uniformizing coordinate magnitudes, SalUn thresholds low-magnitude coordinates of the unlearning gradient. As SignSGD and Muon are both SGD-variants, where Muon promotes rare directions by equalizing magnitudes in the gradient spectrum, one might ask whether \textit{unlearning could similarly benefit} from suppressing weak spectral components. 
We hypothesize that these weak components may encode interactions where forgetting and retention are entangled. This intuition is formalized through the concept of forgetting-retention conflict in~\secref{sec:ssu_theory}.

We propose Spectral Saliency Unlearning (SSU), a singular value thresholding method that suppresses weak spectral components of the unlearning gradient and can be plugged into general gradient-based unlearning methods. We justify SSU from the perspective of the forgetting-retention trade-off, characterizing how weak directions relate to forget-retain interference. We then explain why suppressing them can mitigate utility degradation, \ie, the loss of model performance on data not in the forget set. In addition, we extend the same perspective to coordinate-wise masking and offer a theoretical explanation for the efficacy of SalUn-style unlearning, which has largely been used as a heuristic. 

Empirically, we evaluate SSU across diverse applications, including the unlearning of image classifiers, diffusion models for image generation~\cite{fan2024salun}, and large language models~\cite {yuan2025closer}. Results demonstrate consistent improvements when SSU is used with existing MU objectives. 
Specifically, SSU reduced the average gap by 30.6\% on CIFAR-10 classification, achieving perfect unlearning efficacy with DDPM while enhancing the generation quality by 23.6\% on retained classes, and consistently improves the utility-forgetting trade-off on LLM unlearning, yielding an average gain of 0.0125 on the TOFU benchmark.
{\bf\noindent Our main contributions are as follows:}
\vspace{-3pt}
\begin{itemize}[topsep=0pt, leftmargin=12pt]
    \setlength{\itemsep}{0.0pt}
    \setlength{\parskip}{2.5pt}
    \item Inspired by recent advances in optimization, we introduce spectral saliency for matrix gradients and propose SSU, a novel singular value thresholding approach for gradient-based unlearning.
    \item We provide a theoretical justification for saliency-based thresholding from the forgetting-retention trade-off perspective, and extend the analysis to explain the efficacy of SalUn-style coordinate masking.
    \item Extensive experiments demonstrate the effectiveness of SSU across a range of models, including image classifiers, diffusion models, and LLM unlearning. 
\end{itemize}

\section{Preliminaries}

\myparagraph{Unlearning formulation.}
We consider a general gradient-based approximate unlearning framework~\cite{thudi2022unrolling, neel2021descent,gandikota2023erasing,kumari2023ablating,gandikota2024erasing}. A pre-trained model is updated to reduce the influence of a given forget set $\cD_f$ from the training data $\cD$ while preserving utility on the complementary retain set $\cD_r \coloneqq \cD \setminus \cD_f$. Let $\mW$ denote model parameters. We define an unlearning objective
\begin{align} \label{eq:unlearning}
    \cL_u(\mW)=\cL_f(\mW;\cD_f)+\cL_r(\mW;\cD_r)
\end{align}
composed of a forgetting loss $\cL_f(\mW;\cD_f)$ and a retaining loss $\cL_r(\mW;\cD_r)$, which are optimized jointly via gradient-based updates to minimize~\equref{eq:unlearning}.
We further denote the gradient update $\nabla \cL_u\pa{\mW}$ at each step as 
$\mG_u = \mG_f + \mG_r$, 
where $\mG_f=\nabla \cL_f(\mW;\cD_f)$ and $\mG_r=\nabla \cL_r(\mW;\cD_r)$.  

\myparagraph{Muon and matrix sign.} Muon~\cite{jordan2024muon} is a recently proposed optimizer for matrix-parameterized functions. It orthogonalizes the layer gradient by the projection 
\bea
\mG_o = \argmin_{\mO}\set{\norm{\mG-\mO}_F: \mO^\top \mO \ \text{or} \ \mO \mO^\top = \mI}.
\eea
This operation is equivalent to applying the matrix sign in the gradient's spectral domain \citep{chen2025muon},~\ie, for $\mG = \mU\diag\pa{\bm{\sigma}}\mV$, then we have $\msign\pa{\mG} \coloneqq \mU\diag\pa{\sign\pa{\bm{\sigma}}}\mV$. We therefore view Muon as a matrix counterpart of SignSGD.

\section{Method} \label{sec:ssu_method}

\begin{algorithm}[t]
   \caption{Spectral Saliency Unlearning (SSU)}  
   \label{alg:ssu}
\begin{algorithmic}[1]
   \INPUT 
   Pre-trained parameters $\mW_0$; unlearning objective $\cL_u(\mW)=\cL_f(\mW)+\cL_r(\mW)$, learning rate $\eta$, keep ratio $\gamma\in(0,1]$
   \FOR{$t = 0, 1,\dots,T-1$}  
       \STATE $\mG_{u,t} \leftarrow \nabla_{\mW}\cL_u(\mW_t)$. 
       \FOR{each matrix-structured gradient $\mG_{u,t}^{\pa{\ell}}$}
       \STATE Compute SVD: $\mG_{u,t}^{\pa{\ell}} = \mU \diag(\bm{\sigma}) \mV^\top$
        \STATE Set $k \leftarrow \lfloor \gamma m \rfloor$ for $\bm{\sigma} \in \R^m$.
        \STATE $\widetilde{\mG}_{u,t}^{(\ell)} \leftarrow \mU_{[:,1:k]}\,\diag(\bm{\sigma}_{[1:k]})\,\mV_{[:,1:k]}^\top$
    \ENDFOR
    \STATE $\mW_{t+1} \leftarrow \mW_t - \eta\widetilde{\mG}_{u,t}$
   \ENDFOR
   \OUTPUT $\mW_T$
\end{algorithmic}
\end{algorithm}

For a gradient-based unlearning algorithm that minimizes the objective given in \equref{eq:unlearning}, our proposed SSU replaces the base unlearning gradient $\mG_u$ with a singular value thresholded counterpart that suppresses weak spectral components and retains only the dominant singular directions for the update. The full procedure is summarized in~\algref{alg:ssu}. In the following, we elaborate on this procedure by first defining spectral saliency and then describing the singular value thresholding step.

\subsection{Spectral Saliency} 
We define spectral saliency by measuring the magnitude of the unlearning update $\mG_u$ along the singular directions of the layer gradient. For a matrix-structured parameter $\mW$, let the unlearning gradient's SVD be denoted as 
\begin{align}
    \mG_u = \mU\,\diag(\bm{\sigma})\,\mV^\top \quad
    \;\;\text{with}\;\;
    \quad \bm{\sigma} = \bracks{\sigma_1,  \cdots, \sigma_m}^\top,
\end{align}
where $\sigma_1\ge \cdots \ge \sigma_m\ge 0$, so that each rank-one component $\sigma_i\,\vu_i\vv_i^\top$ represents the update along the singular direction pair $(\vu_i,\vv_i)$. Note, we omit layer and block indices for readability.

The spectral saliency of direction $i$ is defined as its singular magnitude $\sigma_i$, which quantifies how strongly the unlearning objective drives updates along this direction. At a high-level, we view that larger saliency corresponds to a stronger and more reliable unlearning signal, whereas small singular magnitudes indicate weakly supported directions that are less confident for unlearning and may be susceptible to interference between forgetting and retention, an intuition we later formally justify in~\secref{sec:ssu_theory}.

\subsection{Singular Value Thresholding}
To suppress these weak directions, 
we apply \emph{singular value thresholding} \citep{cai2010singular} to $\mG_u$.  Given a threshold $\tau\geq 0$, we define the thresholded gradient
\begin{align}
    \widetilde{\mG}_u \coloneqq \mU \diag\pa{\bm{\sigma} \odot \mathbb{I}_{\bracks{\bm{\sigma} \geq \tau}}} \mV^\top,
    \;\;\text{with}\;\;
     \mathbb{I}_{\bracks{\bm{\sigma} \geq \tau}}(i) = 
    \begin{cases}
                1 & \text{if } \sigma_i \geq \tau \\
                0 & \text{otherwise}\\
    \end{cases}.
\end{align}
Here, $\mathbb{I}_{\bracks{\bm{\sigma} \geq \tau}}$ is a vector-valued indicator function such that its coordinate where singular values below $\tau$ are set to zero while those above are preserved. Equivalently, since the singular values are already sorted, we may retain only the leading components and discard the tail.

Different from coordinate-wise saliency methods, \eg, SalUn~\cite{fan2024salun}, which constructs a coordinate selection mask using dataset specific computations (\eg, based on the forget set) either offline or online, SVD provides an intrinsic ordering of singular values. That is, the most salient directions correspond naturally to the leading singular components. In practice, we use a fixed \emph{keep ratio} $\gamma\in(0,1]$ and set $k=\lfloor \gamma m\rfloor$, retaining the top $\gamma$ fraction of singular components. We then reconstruct
\begin{align}
    \widetilde{\mG}_u \coloneqq
    \sum_{i=1}^{k} \sigma_i\,\vu_i\vv_i^\top =\mU_{[:,1:k]}\,\diag(\bm{\sigma}_{[1:k]})\,\mV_{[:,1:k]}^\top
\end{align}
and proceed with standard gradient-based updates using this thresholded gradient.
For vector- and scalar-structured parameters (\eg, biases), we resort to coordinate-wise thresholding as in SalUn, mirroring Muon’s practice of handling these parameters with vector-based optimizers rather than a spectral update.

\section{Theoretical Justification} \label{sec:ssu_theory}
Our analysis adopts a directional view of the unlearning update, characterizing when and why suppressing weakly supported directions can improve unlearning from the perspective of the forgetting-retention trade-off. We first develop this justification for spectral thresholding on matrix-structured parameters, then derive an analogous interpretation for coordinate-wise masking.

\subsection{Why Spectral Thresholding Helps}
Motivated by the Muon perspective of operating in the gradient’s spectral domain, and by analyses linking smaller singular values to weaker, more specific structure~\citep{kleinman2024critical, rahaman2019spectral, lampinen2018an}, we take the reverse stance for unlearning. 
Weak spectral components tend to be the least reliable directions to update for unlearning, as they may reflect forget-retain entanglement and thereby worsen the forgetting-retention trade-off. Accordingly, we use the singular basis of the unlearning gradient to explain why suppressing these components via spectral thresholding can improve unlearning.
We start by examining how the forgetting gradient and the retaining gradient contribute to and interact within the unlearning gradient. 
We consider the compact SVD of the unlearning gradient, retaining only non-zero singular values:
\bea
\mG_u = \mU_e\diag(\bm{\sigma}_e)\,\mV_e^\top
= \sum_{i=1}^{r_e} \sigma_i\,\vu_i\vv_i^\top, \sigma_1\ge\cdots\ge\sigma_{r_e}>0
\eea
where $r_e$ denotes the effective rank; exact rank in theory and significantly non-zero singular values in practice. 
We refer to $\mathcal{E}=\mathrm{span}(\mV_e)$ as the effective subspace of update directions induced by $\mG_u$. In this subspace, each singular direction $\vv_i$ defines a canonical one-dimensional update mode with a magnitude $\sigma_i$.

To understand how the forgetting and retention gradients interact within the effective subspace, we introduce the notion of \emph{forget-retain alignment}, which measures the directional alignment between the projected forgetting and retention gradients. 
\begin{restatable}{definition}{defalignment} {(Forget-Retain Alignment)} \label{def:alignment}
    We defined the alignment between the forgetting and retention directions of $\vv_i$ as 
      \bea
      a\pa{\vv_i} = \frac{\inner{\mG_f\vv_i}{\mG_r\vv_i}}{\norm{\mG_f\vv_i}_2\norm{\mG_r\vv_i}_2}.
      \eea
\end{restatable}
Using this notion, Prop.~\ref{prop:conflict} shows that when the forgetting and retaining gradients exhibit strong negative alignment in the effective subspace, this indicates that there is significant conflict between the forgetting and retention objectives and thus cannot be simultaneously improved along the same update direction, yielding an unavoidable trade-off.

\begin{proposition} \label{prop:conflict}
(Informal) Assume $\cL_f$ and $\cL_r$ are smooth. For unlearning gradient $\mG_u = \eta \sum_{i=1}^{r_e} \sigma_i \vu_i\vv_i^\top$, consider a single-direction update along the $i^{th}$ singular component $\vv_i$: $\mW' = \mW - \eta \sigma_i \vu_i\vv_i^\top$. There exist constants $a_0 < 0$ and $\eta_0>0$ such that if the alignment $a\pa{\vv_i} < a_0$, and the step size $\eta < \eta_0$, then the forgetting progress $\Delta \cL_f = \cL_f\pa{\mW'} - \cL_f\pa{\mW}$ and the retaining progress $\Delta \cL_r = \cL_r\pa{\mW'} - \cL_r\pa{\mW}$ satisfies $\Delta \cL_f \Delta \cL_r < 0$. That is, along any singular direction $\vv_i$ where the retain and forget gradients are in significant conflict, any update that improves one objective must necessarily worsen the other.
\end{proposition}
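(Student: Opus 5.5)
The plan is a first-order Taylor expansion of each loss along the rank-one step $\Delta\mW = -\eta\sigma_i\vu_i\vv_i^\top$, followed by smoothness to control the remainder. Assuming $\cL_f$ has $L_f$-Lipschitz gradient and $\cL_r$ has $L_r$-Lipschitz gradient, the descent lemma gives
\begin{align*}
\Delta\cL_f &= -\eta\sigma_i\inner{\mG_f}{\vu_i\vv_i^\top} + R_f, & |R_f| &\le \tfrac{L_f}{2}\eta^2\sigma_i^2,\\
\Delta\cL_r &= -\eta\sigma_i\inner{\mG_r}{\vu_i\vv_i^\top} + R_r, & |R_r| &\le \tfrac{L_r}{2}\eta^2\sigma_i^2,
\end{align*}
where we used $\norm{\vu_i\vv_i^\top}_F=1$. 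The first-order terms are $\inner{\mG_f}{\vu_i\vv_i^\top} = \vu_i^\top\mG_f\vv_i$, and similarly for $\mG_r$. So the sign of each progress term is governed, for small $\eta$, by the projections $p_f := \vu_i^\top\mG_f\vv_i$ and $p_r := \vu_i^\top\mG_r\vv_i$.

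Next I would relate $p_f,p_r$ to the alignment of Definition~\ref{def:alignment}. Since $\mG_u\vv_i=\sigma_i\vu_i$, we have $\mG_f\vv_i+\mG_r\vv_i=\sigma_i\vu_i$, and adding the two projections gives $p_f+p_r=\sigma_i>0$. Write $x=\mG_f\vv_i$ and $y=\mG_r\vv_i$, so that $x+y=\sigma_i\vu_i$, $p_f=\inner{x}{\vu_i}$ and $p_r=\inner{y}{\vu_i}$. Then
\begin{align*}
\sigma_i\, p_f = \inner{x}{x+y} = \norm{x}^2 + \inner{x}{y}, \qquad \sigma_i\, p_r = \norm{y}^2 + \inner{x}{y}.
\end{align*}
Hence $p_f p_r<0$ holds exactly when one of $\norm{x}^2+\inner{x}{y}$ and $\norm{y}^2+\inner{x}{y}$ is negative. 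Their sum is $\norm{x+y}^2>0$, so at most one can be negative. With $\inner{x}{y}=a\norm{x}\norm{y}$, the condition becomes $\min(\norm{x},\norm{y}) < -a\max(\norm{x},\norm{y})$. In words, the alignment must be sufficiently negative relative to the norm ratio $\rho=\min/\max$ of the two projected gradients.

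I expect this to be the main obstacle. Pure negative alignment does not suffice: with $a$ near $-1$ but $\norm{x}=\norm{y}$, the sum $x+y$ would be near zero, so both projections could be small and positive. The constant $a_0$ must therefore depend on the configuration. I would set $a_0 := -\rho(\vv_i)$, where $\rho(\vv_i)\in(0,1]$ is the norm ratio above (this assumes $x,y\ne 0$, which the definition of $a$ already requires). Then $a<a_0$ gives $p_fp_r<0$, and in fact a quantitative margin: $\min(|p_f|,|p_r|)\ge \delta := (|a|-\rho)\max(\norm{x},\norm{y})\min(\norm{x},\norm{y})/\sigma_i\cdot(1/\rho)$, or some equivalent positive quantity. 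The informal statement allows $a_0$ to be chosen this way, and I would state that dependence explicitly in the formal version.

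Finally, I would take $\eta_0 := 2\delta/(\max(L_f,L_r)\sigma_i)$. For $\eta<\eta_0$ each remainder satisfies $|R|<\eta\sigma_i\delta\le \eta\sigma_i|p|$, so $\Delta\cL_f$ has the sign of $-p_f$ and $\Delta\cL_r$ has the sign of $-p_r$. Since these signs are opposite, $\Delta\cL_f\Delta\cL_r<0$. The last sentence of the proposition follows because flipping the sign of the step negates both first-order terms, so no step along $\vv_i$ (with either sign) improves both losses.
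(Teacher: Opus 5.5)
Your proposal is correct and follows the paper's proof essentially step for step. Lemma~\ref{lem:vec_inner} uses your identities $\sigma_i f_i=\norm{\mG_f\vv_i}^2+\inner{\mG_f\vv_i}{\mG_r\vv_i}$ and $\sigma_i r_i=\norm{\mG_r\vv_i}^2+\inner{\mG_f\vv_i}{\mG_r\vv_i}$ to get $f_ir_i<0$ under exactly your threshold $a_0=-\rho(\vv_i)$, and Lemma~\ref{lem:conflict} transfers these signs to $\Delta\cL_f,\Delta\cL_r$ via the two-sided smoothness bound with $\eta<\min\{2\abs{f_i}/(\beta_f\sigma_i),\,2\abs{r_i}/(\beta_r\sigma_i)\}$.

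One small slip needs fixing. The negative projection has magnitude exactly $(\abs{a}-\rho)\max\{\norm{x},\norm{y}\}\min\{\norm{x},\norm{y}\}/\sigma_i$, so your extra factor $1/\rho$ overstates $\min(\abs{p_f},\abs{p_r})$, and hence $\eta_0$, whenever $\rho<1$. Simply take $\delta:=\min(\abs{p_f},\abs{p_r})$.
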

We refer the readers to Appx.~\ref{sec:appendix_ssu_1}  for the formal statement, assumptions, and proof of this proposition. 
Next, we show how this conflict is reflected in the spectrum of the unlearning gradient.
\begin{proposition} \label{prop:small_singular}
{\bf (a)} If the direction carries a nontrivial forget/retain signal, \ie, $\norm{\mG_f \vv_i}+\norm{\mG_r \vv_i} \geq \xi$ for $\xi > 0$ and its singular value is small relative to this signal, \ie, $\sigma_i \leq \rho \xi$ for $\rho \in \pa{0, \frac{1}{\sqrt{2}}}$, then its forget-retain alignment satisfies $a\pa{\vv_i} \leq 2\rho^2-1 < 0$.
 \\
{\bf (b)} (Informal) Assume the projections of $\mG_f$ and $\mG_r$ onto the effective subspace of $\mG_u$ have bounded spectral disparity. For two directions $\vv_i,\vv_j$ with alignment scores $a\pa{\vv_i} \leq -\delta_i$ and $a\pa{\vv_j} \geq \delta_j$ where $\delta_i, \delta_j \in (0, 1]$, if the separation $\pa{\delta_i + \delta_j}$ exceeds a gap threshold, then $\sigma_i < \sigma_j$.
\end{proposition}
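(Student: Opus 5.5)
The whole argument rests on one identity: since $\mG_u=\mG_f+\mG_r$ and $\mG_u\vv_i=\sigma_i\vu_i$, we have $\sigma_i=\norm{\mG_u\vv_i}_2=\norm{\vf+\vr}_2$, where we abbreviate $\vf\coloneqq\mG_f\vv_i$ and $\vr\coloneqq\mG_r\vv_i$. Expanding gives
\[
\sigma_i^2=\norm{\vf}^2+\norm{\vr}^2+2\,a(\vv_i)\norm{\vf}\norm{\vr}.
\]
This relation ties the singular value directly to the alignment of Definition~\ref{def:alignment}, and both parts of Proposition~\ref{prop:small_singular} will be read off from it. We assume $\vf,\vr\neq 0$ so that $a(\vv_i)$ is defined.

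\textbf{Part (a).} Write $x=\norm{\vf}$, $y=\norm{\vr}$ and $a=a(\vv_i)$. We rewrite the identity as $\sigma_i^2=(x+y)^2-2(1-a)xy$. By AM--GM, $xy\le (x+y)^2/4$, so
\[
\sigma_i^2\ge (x+y)^2\bigl(1-\tfrac{1-a}{2}\bigr)=(x+y)^2\,\frac{1+a}{2}.
\]
Combining this with $x+y\ge\xi$ and $\sigma_i\le\rho\xi$ gives $\rho^2\xi^2\ge \xi^2(1+a)/2$. Hence $a\le 2\rho^2-1$, which is negative because $\rho<1/\sqrt2$. If $a<-1$ were possible the bound $(x+y)^2(1+a)/2\le\sigma_i^2$ would still hold trivially, but $a\ge-1$ by Cauchy--Schwarz, so no case split is needed. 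This part is routine.

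\textbf{Part (b).} I would formalize ``bounded spectral disparity'' as two-sided bounds holding on the effective subspace: for every unit $\vv\in\mathcal E$, $\mu\le\norm{\mG_f\vv},\norm{\mG_r\vv}\le L$ for some $0<\mu\le L$ with $\kappa=L/\mu$ controlled. Applying the identity at $\vv_i$ with $a(\vv_i)\le-\delta_i$ gives
\[
\sigma_i^2\le x_i^2+y_i^2-2\delta_i x_iy_i .
\]
Since $x_i^2+y_i^2=(x_i-y_i)^2+2x_iy_i$, this becomes $\sigma_i^2\le (x_i-y_i)^2+2(1-\delta_i)x_iy_i\le (L-\mu)^2+2(1-\delta_i)L^2$. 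At $\vv_j$, with $a(\vv_j)\ge\delta_j$, the same identity gives $\sigma_j^2\ge x_j^2+y_j^2+2\delta_j x_jy_j\ge 2\mu^2(1+\delta_j)$.

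The conclusion $\sigma_i<\sigma_j$ then holds whenever
\[
(L-\mu)^2+2(1-\delta_i)L^2<2\mu^2(1+\delta_j).
\]
This rearranges to a condition of the form $2L^2\delta_i+2\mu^2\delta_j>(L-\mu)^2+2L^2-2\mu^2$. To state it as a threshold on the separation $\delta_i+\delta_j$, I would bound the left side below by $2\mu^2(\delta_i+\delta_j)$. The resulting sufficient condition is
\[
\delta_i+\delta_j>\frac{(L-\mu)^2+2(L^2-\mu^2)}{2\mu^2}=\frac{(\kappa-1)^2}{2}+\kappa^2-1,
\]
which is exactly the gap threshold.

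\textbf{Main obstacle.} The difficulty is choosing the disparity assumption so that this threshold is attainable, i.e.\ below $2$. The bound above requires $\kappa$ close to $1$. A sharper formulation would bound the ratio $\norm{\mG_f\vv}/\norm{\mG_r\vv}$ and the overall scale separately. If the crude version is too loose, I would try that refinement first: normalize by $s=x+y$ and use $\sigma^2=s^2\bigl(1-2(1-a)t(1-t)\bigr)$ with $t=x/s\in[t_0,1-t_0]$. This makes the dependence on $a$ monotone and isolates a clean gap condition.
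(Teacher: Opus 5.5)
Your part (a) is the paper's argument. Both rest on $\sigma_i^2=\|\mG_f\vv_i\|^2+\|\mG_r\vv_i\|^2+2a(\vv_i)\|\mG_f\vv_i\|\|\mG_r\vv_i\|$. Your AM--GM step $xy\le (x+y)^2/4$ (which uses $1-a\ge 0$) is equivalent to the paper's division by $2xy$ followed by $x/y+y/x\ge 2$.

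Part (b) has the same skeleton as the paper: upper-bound $\sigma_i^2$ via $a(\vv_i)\le-\delta_i$, lower-bound $\sigma_j^2$ via $a(\vv_j)\ge\delta_j$, and compare. Your derivation is correct under your hypothesis, but you formalize ``bounded spectral disparity'' more strongly than the paper, and this has a real cost. The paper assumes the eigenvalues of $\mM_f=\mP\mG_f^\top\mG_f\mP$ and of $\mM_r$ on $\mathcal E$ each concentrate around \emph{their own} averages. By Rayleigh quotients this gives $\|\mG_f\vv\|^2\in[(1-\epsilon)\lambda_f^{\mathrm{avg}},(1+\epsilon)\lambda_f^{\mathrm{avg}}]$ and $\|\mG_r\vv\|^2\in[(1-\epsilon)\lambda_r^{\mathrm{avg}},(1+\epsilon)\lambda_r^{\mathrm{avg}}]$, with independent scales. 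Bounding $x^2+y^2$ from above and $xy$ from below then yields the threshold $\frac{\epsilon}{1-\epsilon}\cdot\frac{1+s^2}{s}$ with $s=\sqrt{\lambda_r^{\mathrm{avg}}/\lambda_f^{\mathrm{avg}}}$. This tends to $0$ as $\epsilon\to 0$ for any fixed forget/retain scale ratio.

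Your single band $[\mu,L]$ for both norms forces $\kappa\ge\max(s,1/s)$. So your threshold $\frac{(\kappa-1)^2}{2}+\kappa^2-1$ stays large whenever the two objectives have different magnitudes, however well-conditioned each projection is individually. Concretely, suppose $\|\mG_f\vv\|=1$ and $\|\mG_r\vv\|=0.1$ for all unit $\vv\in\mathcal E$. Then $\sigma^2=1.01+0.2\,a(\vv)$, and the conclusion holds for any $\delta_i,\delta_j>0$; the paper's $\epsilon$ can be taken arbitrarily small. Your threshold is about $140$, so your result says nothing. Such imbalance is realistic here: the DDPM runs weight the retain loss by $10^{-3}$.

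What your version buys is an elementary statement with no Gram matrices. What the paper's buys is invariance to the relative scale of $\mG_f$ and $\mG_r$. The repair is simpler than the $t=x/(x+y)$ normalization you sketch. Take separate bands $[\mu_f,L_f]$ and $[\mu_r,L_r]$, and bound directly $\sigma_i^2\le L_f^2+L_r^2-2\delta_i\mu_f\mu_r$ and $\sigma_j^2\ge \mu_f^2+\mu_r^2+2\delta_j\mu_f\mu_r$. With $L_\bullet^2=(1+\epsilon)\lambda_\bullet^{\mathrm{avg}}$ and $\mu_\bullet^2=(1-\epsilon)\lambda_\bullet^{\mathrm{avg}}$, this reproduces the paper's condition exactly. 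Even with a common band, this direct bounding gives the slightly cleaner threshold $\kappa^2-1$. That is better than your $(x_i-y_i)^2$ rewrite followed by $L^2\delta_i\ge\mu^2\delta_i$.
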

We refer the readers to Appx.~\ref{sec:appendix_ssu_2} for the assumption of spectral disparity, the formal statement, and proof. 
Prop.~\ref{prop:small_singular} $(a)$ shows that as long as there is a nontrivial forgetting or retention signal, small singular values correspond to directions in which the forgetting and retaining gradients are negatively aligned. If a direction carries neither forgetting nor retention signal, \ie, $\norm{\mG_f \vv_i} = \norm{\mG_r \vv_i} = 0$, then $\vv_i$ lies in the null space of $\mG_u$, which is not affected by singular value thresholding. Prop.~\ref{prop:small_singular} $(b)$ complementarily shows that, between sufficiently positively aligned directions and negatively aligned ones, the former attain larger singular values. Together with Prop.~\ref{prop:conflict}, this analysis 
shows that by suppressing small singular components of $\mG_u$, SSU preferentially removes directions that are more likely to exhibit strong forget-retain conflict, thereby mitigating unnecessary retention degradation while preserving the dominant unlearning signal. 

\subsection{Justification for SalUn-Style Masking}
We further extend the same directional analysis to SalUn-style coordinate thresholding, which operates on the unlearning gradient in the standard coordinate basis. 
Each coordinate $i\in[d]$ defines a one-dimensional update direction, and the unlearning gradient decomposes as $g_{u,i}=g_{f,i}+g_{r,i}$. A key regime of interest is when a coordinate carries a nontrivial forgetting or retention signal (otherwise updates along that coordinate are negligible), yet the magnitude $|g_{u,i}|$ is small, suggesting cancellation between forgetting and retention effects along that coordinate. This explains why SalUn’s heuristic of suppressing small-magnitude coordinates can be beneficial: updating along such weakly supported coordinates is likely to couple forgetting progress with retention degradation, as characterized by the following proposition whose proof can be found in Appx.~\ref{sec:appendix_salun}.
\begin{restatable}{proposition}{propsalun} \label{prop:salun}
    Assume $\cL_f$ and $\cL_r$ are $\beta_f$- and $\beta_r$-smooth. Consider any coordinate $i\in[d]$ such that the forgetting/retention signal is nontrivial, \ie, $\forall i \in [d]$, $\abs{g_{f,i}} + \abs{g_{r,i}} \geq \xi$ where $\xi > 0$, and the combined unlearning gradient is small, \ie, $\abs{g_{u,i}} \leq \rho \xi$ for some $\rho\in(0,1]$. For the coordinate-wise update $\vw' = \vw - \eta g_{u,i}$ with step size $\eta < \min\pa{\frac{2\abs{g_{f,i}}}{\beta_f\abs{g_{u, i}}}, \frac{2\abs{g_{r,i}}}{\beta_r\abs{g_{u, i}}}}$, the induced progress $\Delta\cL_f = \cL_f\pa{\vw'} - \cL_f\pa{\vw}$ and $\Delta\cL_r = \cL_r\pa{\vw'} - \cL_r\pa{\vw}$ satisfies $\Delta\cL_f \Delta \cL_r < 0$.
\end{restatable}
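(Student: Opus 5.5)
The plan is to read the update as a step along the coordinate direction, $\vw' = \vw - \eta\, g_{u,i}\ve_i$, and apply the two-sided quadratic bound from smoothness to each loss separately. For a $\beta$-smooth $\cL$ with $\partial_i \cL(\vw) = g_i$, this bound gives
\begin{align*}
\abs{\cL(\vw') - \cL(\vw) + \eta\, g_{u,i}\, g_i} \le \tfrac{\beta}{2}\,\eta^2 g_{u,i}^2 .
\end{align*}
I would instantiate it twice, once with $(\cL_f,\beta_f,g_{f,i})$ and once with $(\cL_r,\beta_r,g_{r,i})$. Each $\Delta\cL$ is then written as a first-order term $-\eta\, g_{u,i} g_{f,i}$ (respectively $-\eta\, g_{u,i} g_{r,i}$) plus a remainder bounded in absolute value by $\tfrac{\beta}{2}\eta^2 g_{u,i}^2$.

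\textbf{Step 1: sign of each progress term.} The step-size condition $\eta < \frac{2\abs{g_{f,i}}}{\beta_f\abs{g_{u,i}}}$ is exactly the inequality $\tfrac{\beta_f}{2}\eta^2 g_{u,i}^2 < \eta\abs{g_{u,i}}\abs{g_{f,i}}$, which makes the remainder strictly smaller than the first-order term. Hence $\sign(\Delta\cL_f) = -\sign(g_{u,i}g_{f,i})$, and in the same way $\sign(\Delta\cL_r) = -\sign(g_{u,i}g_{r,i})$. Multiplying,
\begin{align*}
\sign(\Delta\cL_f\,\Delta\cL_r) = \sign\pa{g_{u,i}^2\, g_{f,i}\, g_{r,i}} .
\end{align*}
The hypothesis implicitly forces $g_{u,i}\neq 0$, $g_{f,i}\neq 0$ and $g_{r,i}\neq 0$, since otherwise the step-size bound is either undefined or makes the admissible range of $\eta>0$ empty. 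So it only remains to show $g_{f,i}\,g_{r,i}<0$.

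\textbf{Step 2: opposite signs from cancellation.} The hypotheses give
\begin{align*}
\abs{g_{f,i}+g_{r,i}} = \abs{g_{u,i}} \le \rho\xi \le \xi \le \abs{g_{f,i}}+\abs{g_{r,i}} .
\end{align*}
For two nonzero reals, equality in the triangle inequality holds exactly when they have the same sign. So whenever the chain above contains a strict inequality, $g_{f,i}$ and $g_{r,i}$ have opposite signs. In that case $\Delta\cL_f\,\Delta\cL_r<0$ follows immediately from Step 1.

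\textbf{Main obstacle: the boundary case $\rho=1$.} Strictness is automatic when $\rho<1$, since then $\rho\xi<\xi$. At $\rho = 1$, however, the statement as written admits a counterexample: take $g_{f,i}, g_{r,i}$ of the same sign with $\abs{g_{f,i}}+\abs{g_{r,i}}=\xi$. Then $\abs{g_{u,i}}=\xi$ satisfies every hypothesis, yet both losses move in the same direction and the product is positive. I would therefore state the result for $\rho\in(0,1)$, or equivalently for $\rho\in(0,1]$ under the strict assumption $\abs{g_{u,i}}<\xi$. I would remark that this boundary case is exactly the one with no cancellation at all, so excluding it loses nothing from the intended regime of small $\abs{g_{u,i}}$.
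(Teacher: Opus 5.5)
Your proposal is correct and follows the paper's proof essentially step for step: you use cancellation to show that $g_{f,i}$ and $g_{r,i}$ have opposite signs, and the two-sided smoothness bound with the given step size to get $\sign(\Delta\cL_f)=-\sign(g_{f,i}g_{u,i})$ and $\sign(\Delta\cL_r)=-\sign(g_{r,i}g_{u,i})$. Your flag on the boundary case is also right: the paper's own argument explicitly invokes $\rho<1$ at the cancellation step, and your same-sign example shows the claim fails at $\rho=1$, so the hypothesis should read $\rho\in(0,1)$.
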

That is to say, the unlearning update necessarily improves one objective while worsening the other, among forgetting and retention.
To our knowledge, existing saliency-masking methods are primarily motivated empirically. 
The proposition above offers a nontrivial theoretical justification from the forgetting-retention trade-off perspective.

\subsection{Limitations and Discussion} \label{sec:limit}
Our theoretical results justify SSU as a principled mechanism for mitigating forget-retain conflict by suppressing weak spectral components. At the same time, the theory is intended as a characterization of the forgetting-retention trade-off rather than a complete quantitative prediction of end-to-end unlearning performance. Extending the analysis to weaker assumptions and deriving tighter performance-level guarantees remain important directions for future work. 

A natural variant of SSU would be to threshold directions by forget-retain alignment. However, this requires separately computing $\mG_f$ and $\mG_r$, projecting both onto the singular directions of $\mG_u$, and then evaluating their directional inner products, introducing nontrivial computation overhead. Singular values, in contrast, are obtained directly from $\mG_u$ and, supported by Prop.~\ref{prop:small_singular}, serve as a principled proxy for forget-retain conflict.
While a more fine-grained, alignment-aware criterion may further improve thresholding, we focus on the simple, broadly applicable choice of singular value thresholding.

\section{Experiments} \label{sec:exp}

We conduct experiments across three applications spanning over image classification (\secref{sec:image_class}), image generation (\secref{sec:image_gen}), and language modeling (\secref{sec:unlearn_llm}) following the benchmarks proposed by existing MU works.

\subsection{Random Subset Unlearning in Image Classification}\label{sec:image_class}

\myparagraph{Setup and evaluation.}
Following the setup by~\citet{fan2024salun}, we focus on random subset unlearning in image classification tasks using the CIFAR-10 dataset. 
We employ ResNet-18~\cite{he2016deep} as our architecture and compare our method against three simple baselines: fine-tuning (FT)~\cite{warnecke2021machine}, gradient ascent (GA)~\cite{thudi2022unrolling}, influence unlearning (IU)~\cite{izzo2021approximate}, and five competitive baselines: $\ell_1$-sparse~\cite{jia2023model}, SCRUB~\cite{kurmanji2023towards}, SSD~\cite{foster2024fast}, SFRON~\cite{huang2024unified}, and SalUn~\cite{fan2024salun}. 

Following SalUn's setup, we regard the retrained oracle as the gold standard of unlearning. We report four evaluation metrics: forgetting set accuracy (FA, lower is better) to measure unlearning efficacy, remaining set accuracy (RA) and test set accuracy (TA) to assess preserved generalization, and the membership inference attack (MIA)~\cite{fan2024salun} success rate on the forgetting set as a privacy metric. 
Finally, we report the average gap between each method and the retrained oracle model across the four metrics as the overall performance. Implementation details are provided in Appx.~\ref{sec:appendix_image_cls}.

\myparagraph{Results.} In~\tabref{tab:cifar10_classification}, we summarize the results for random subset unlearning on CIFAR-10 with ResNet-18, where 10\% of the training data is forgotten.\par
\begin{wraptable}{r}{0.5\linewidth}
\vspace{-0.70cm}
\centering
\renewcommand{\arraystretch}{1.15}
\caption{Random unlearning of ResNet-18 on CIFAR-10, for 10\% random data forgetting.}
\label{tab:cifar10_classification}
\resizebox{0.5\textwidth}{!}{
\begin{tabular}{l|cccccc}
\specialrule{.15em}{.05em}{.05em}
Methods & FA $\downarrow$ & RA $\uparrow$ & TA $\uparrow$ & MIA $\uparrow$ & Avg. Gap $\downarrow$ & Time $\downarrow$ \\
\midrule
Retrain & 94.86 & 100.00 & 94.14 & 12.86 & 0.00 & 43.29 \\
\midrule
FT & 99.30 & 99.91 & 94.44 & 2.82 & 3.72 & 2.37 \\
GA & 98.90 & 99.23 & 93.83 & 1.90 & 4.02 & 0.13 \\
IU & 99.44 & 99.53 & 94.72 & 0.10 & 4.60 & 3.22 \\
$\ell_{1}$-sparse & 95.81 & 97.74 & 91.59 & 9.84 & 2.20 & 2.36 \\
SCRUB & 99.39 & 99.76 & 93.91 & 3.69 & 3.54 & 1.88 \\
SSD & 94.46 & 94.86 & 88.28 & 7.80 & 4.12 & 2.78 \\
SFRON & 99.32 & 99.96 & 94.74 & 1.98 & 4.00 & 1.90 \\
SalUn & 96.62 & 99.46 & 93.44 & 14.28 & 1.11 & 2.61\\
\hline
\rowcolor{gray!20}
+ SSU & 96.24 & 99.26 & 93.39 & 12.66 & \bf 0.77 & 2.62 \\
\specialrule{.15em}{.05em}{.05em}
\end{tabular}
}

\end{wraptable}
Most baseline methods face a fundamental trade-off: techniques such as FT, GA, IU, SCRUB, and SFRON yield high FA, indicating insufficient forgetting, while methods like SSD that achieve lower FA suffer large drops in RA and TA, compromising model utility. 
Among the baselines, SalUn achieves the best overall performance with an average gap of 1.11. Our method improves upon SalUn, achieving the lowest average gap of 0.77 while maintaining comparable performance across all metrics. Notably, although SSU employs SVD, it applies only to convolution layers, resulting in negligible computational overhead during unlearning.
\WFclear

\subsection{Class-wise Unlearning in Image Generation}
\label{sec:image_gen}

\myparagraph{Setup and evaluation.}
Following the setup in~\cite{fan2024salun}, we evaluate our method on class-wise unlearning for image generation using DDPM~\cite{ddpm} on CIFAR-10 and Stable Diffusion V1.4 (SD)~\cite{sd} on Imagenette~\cite{imagenette}. We compare against four baselines: SA~\cite{heng2023selective}, ESD~\cite{gandikota2023erasing}, SFRON~\cite{huang2024unified}, and SalUn~\cite{fan2024salun}. For DDPM, SVD is applied only to the convolution layers of the denoising U-Net, while for Stable Diffusion, SVD is applied only to the cross-attention layers of the U-Net. 
We evaluate unlearning quality using the FID~\cite{Seitzer2020FID} score to measure generation quality on retained classes, and forgetting set accuracy (FA) of unlearned classes via a pre-trained classifier. Implementation details are provided in Appx.~\ref{sec:appendix_image_gen}.

\myparagraph{Results on DDPM.} \tabref{tab:ddpm_cifar} presents class-wise unlearning results for DDPM on CIFAR-10 across five classes.\par
\begin{wraptable}[9]{r}{0.58\linewidth}
\vspace{-0.73cm}
\centering
\setlength{\tabcolsep}{6pt}
\renewcommand{\arraystretch}{1.15}
\caption{Class-wise unlearning of image generation on CIFAR-10 with DDPM.}
\resizebox{\linewidth}{!}{
\begin{tabular}{l|cc|cc|cc|cc|cc}
\specialrule{.15em}{.05em}{.05em}
\multirow{2}{*}{\textbf{Method}} &
\multicolumn{10}{c}{\textbf{CIFAR-10 Class-wise Unlearning}} \\
\cline{2-11}
& \multicolumn{2}{c|}{Automobile} & \multicolumn{2}{c|}{Cat} & \multicolumn{2}{c|}{Dog} &
  \multicolumn{2}{c|}{Horse} & \multicolumn{2}{c}{Truck}  \\
& \textbf{FA}$\downarrow$ & \textbf{FID}$\downarrow$ &
  \textbf{FA}$\downarrow$ & \textbf{FID}$\downarrow$ &
  \textbf{FA}$\downarrow$ & \textbf{FID}$\downarrow$ &
  \textbf{FA}$\downarrow$ & \textbf{FID}$\downarrow$ &
  \textbf{FA}$\downarrow$ & \textbf{FID}$\downarrow$ \\
\midrule
SA    & \textbf{0.00} & 23.56 & 14.20 & 21.34 &  8.60 & 21.19 & \textbf{0.00} & 21.13 & \textbf{0.00} & 29.04 \\

SFRon & \textbf{0.00} & 20.70 &  7.40 & 18.44 & 0.20 & 18.89 & \textbf{0.00} & 19.93 & \textbf{0.00} & 20.61 \\
SalUn  & 0.20          & 21.23 &  1.40 & 20.29 & \textbf{0.00} & 20.18 & 0.60 & 20.70 & 0.80 & 20.45 \\

\midrule
\rowcolor{gray!20}
+ SSU   & \bf 0.00          & \textbf{15.46} & \textbf{1.00} & \bf 16.01 & \textbf{0.00} & \textbf{15.19} & \textbf{0.00} & \textbf{15.60} & \textbf{0.00} & \textbf{16.32} \\
\specialrule{.15em}{.05em}{.05em}
\end{tabular}
}
\label{tab:ddpm_cifar}

\end{wraptable}
While all methods achieve near-perfect forgetting, they differ significantly in generation quality. Baseline methods yield FID scores ranging from 18.44 to 29.04, indicating moderate image quality on retained classes. Our method achieves substantial improvements, with FID scores between 15.19 and 16.32 across all classes, which is a 25-30\% reduction compared to baselines. Notably, SSU maintains perfect forgetting on all classes while delivering the best generation quality, with the most significant improvements on Automobile, Cat, and Truck. These results show that our approach can effectively unlearn target data while preserving and even improving the model's generative capabilities. 

While the method requires SVD computation during training, we note that the method has minimal overhead. For U-Net architectures, gradients are reshaped to moderate-sized 2D matrices (typically $\leq 512 \times 512$) for SVD, which can be efficiently decomposed using optimized GPU implementations. 
In practice, our method takes approximately 40 seconds per 100 training steps on one NVIDIA L40S, nearly identical to SalUn, while providing better unlearning through spectral filtering.
\WFclear
\myparagraph{Results on SD.}
\tabref{tab:sd_imagenette} shows class-wise unlearning performance on Stable Diffusion~\cite{sd} with the Imagenette dataset~\cite{imagenette}.
We generate 300 images per class for the computation of both FA and FID. All methods achieve strong forgetting performance with average FA below 0.30\%, confirming effective removal of target concepts. In terms of generation quality, ESD achieves an average FID of 1.71, while SalUn improves this to 1.55.
\par
\begin{wraptable}{r}{0.6\linewidth}
\vspace{-0.70cm}
\centering
\small
\setlength{\tabcolsep}{6pt}
\renewcommand{\arraystretch}{1.20}
\caption{Class-wise unlearning of image generation on ImageNette with SD. 
}
\label{tab:sd_imagenette}
\resizebox{\linewidth}{!}{
\begin{tabular}{l|cc|cc|>{\columncolor{gray!20}}c>{\columncolor{gray!20}}c}
\specialrule{.15em}{.05em}{.05em}
\multicolumn{1}{l|}{\textbf{Forget Class}}
& \multicolumn{2}{c|}{\textbf{ESD}} 
& \multicolumn{2}{c|}{\textbf{SalUn}} 
& \multicolumn{2}{c}{\cellcolor{gray!20}\textbf{+ SSU}} \\
& FA ($\downarrow$) & FID ($\downarrow$)
& FA ($\downarrow$) & FID ($\downarrow$)
& \cellcolor{gray!20}FA ($\downarrow$) & \cellcolor{gray!20}FID ($\downarrow$) \\
\midrule
Tench            & 0.00 & 2.37 & 0.00 & 0.91 & 0.00 & 0.76 \\
English Springer & 0.00 & 1.70 & 0.00 & 0.92 & 0.00 & 1.17 \\
Cassette Player  & 0.00 & 1.34 & 0.67 & 1.30 & 0.67 & 1.16 \\
Chain Saw        & 0.00 & 1.48 & 0.00 & 1.38 & 0.00 & 1.55 \\
Church           & 2.33 & 2.56 & 0.00 & 1.73 & 0.00 & 1.65 \\
French Horn      & 0.00 & 1.61 & 0.00 & 1.36 & 0.00 & 1.14 \\
Garbage Truck    & 0.00 & 1.57 & 0.00 & 1.01 & 0.00 & 1.00 \\
Gas Pump         & 0.00 & 1.34 & 0.00 & 2.78 & 0.00 & 1.48 \\
Golf Ball        & 0.67 & 1.35 & 1.33 & 2.44 & 1.33 & 1.12 \\
Parachute        & 0.00 & 1.75 & 0.00 & 1.68 & 0.00 & 1.35 \\
\midrule
Average          & 0.30 & 1.71 & \bf 0.20 & 1.55 & \bf 0.20 & \bf 1.24 \\
\specialrule{.15em}{.05em}{.05em}
\end{tabular}
}

\vspace{0.53cm}
\end{wraptable}
Our method further advances the state-of-the-art with an average FID of 1.24, achieving the best overall performance.

SSU demonstrates consistent improvements across most classes, 
with larger gains on Tench, French Horn, Gas Pump, and Golf Ball.
Moreover, SSU achieves slightly better forgetting with an average FA of 0.20\% compared to 0.30\% for ESD, matching SalUn's 0.20\%. These results validate the generalizability of our approach across different diffusion architectures and demonstrate its effectiveness in balancing unlearning efficacy with generation quality.
\WFclear
\begin{figure*}[t]
\centering
\resizebox{\textwidth}{!}{\begin{tabular}{c*{10}{c}}
\toprule
 & \textbf{tench} & \textbf{springer} & \textbf{cassette} & \textbf{chain saw} & \textbf{church} & \textbf{French horn} & \textbf{garbage truck} & \textbf{gas pump} & \textbf{golf ball} & \textbf{parachute} \\
\midrule
\multirow{2}{*}{\textbf{Unlearn}} &
\includegraphics[width=0.09\textwidth]{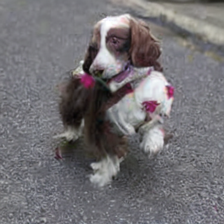} & 
\includegraphics[width=0.09\textwidth]{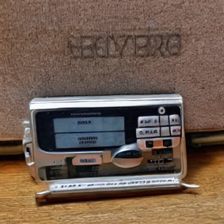} & 
\includegraphics[width=0.09\textwidth]{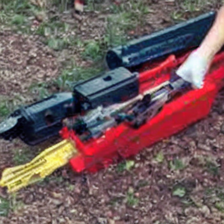} & 
\includegraphics[width=0.09\textwidth]{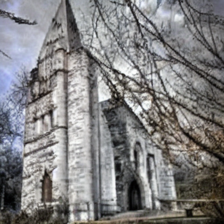} & 
\includegraphics[width=0.09\textwidth]{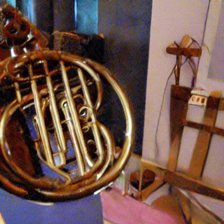} & 
\includegraphics[width=0.09\textwidth]{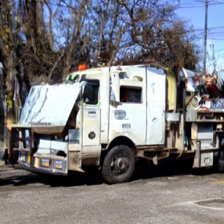} & 
\includegraphics[width=0.09\textwidth]{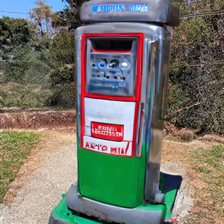} & 
\includegraphics[width=0.09\textwidth]{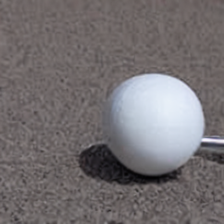} & 
\includegraphics[width=0.09\textwidth]{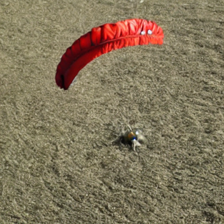} & 
\includegraphics[width=0.09\textwidth]{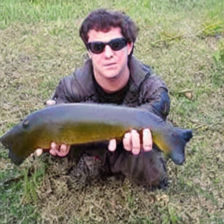} \\
& \includegraphics[width=0.09\textwidth]{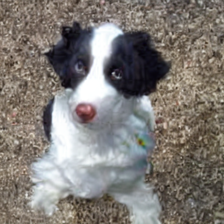} & 
\includegraphics[width=0.09\textwidth]{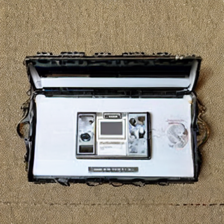} & 
\includegraphics[width=0.09\textwidth]{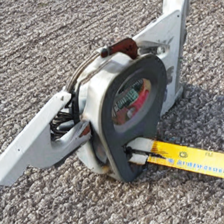} & 
\includegraphics[width=0.09\textwidth]{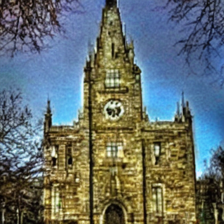} & 
\includegraphics[width=0.09\textwidth]{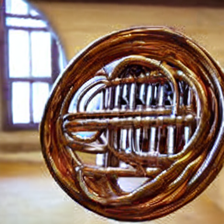} & 
\includegraphics[width=0.09\textwidth]{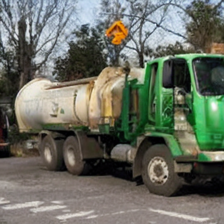} & 
\includegraphics[width=0.09\textwidth]{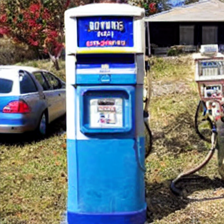} & 
\includegraphics[width=0.09\textwidth]{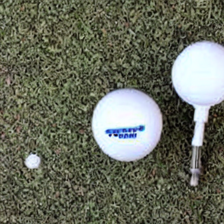} & 
\includegraphics[width=0.09\textwidth]{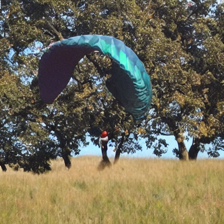} & 
\includegraphics[width=0.09\textwidth]{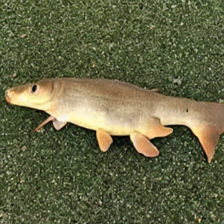} \\
\midrule
\multirow{2}{*}{\textbf{Retain}} &
\includegraphics[width=0.09\textwidth]{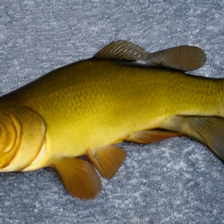} & 
\includegraphics[width=0.09\textwidth]{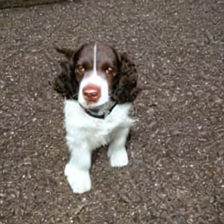} & 
\includegraphics[width=0.09\textwidth]{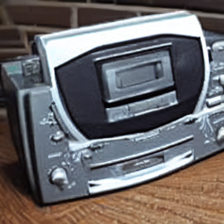} & 
\includegraphics[width=0.09\textwidth]{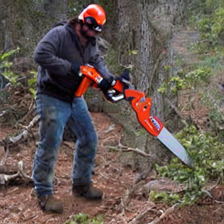} & 
\includegraphics[width=0.09\textwidth]{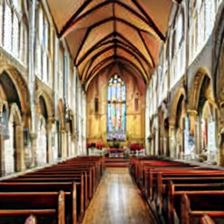} & 
\includegraphics[width=0.09\textwidth]{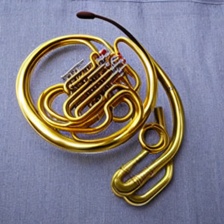} & 
\includegraphics[width=0.09\textwidth]{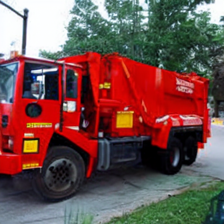} & 
\includegraphics[width=0.09\textwidth]{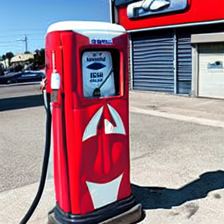} & 
\includegraphics[width=0.09\textwidth]{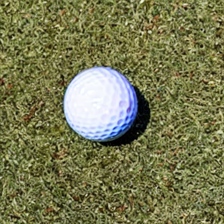} & 
\includegraphics[width=0.09\textwidth]{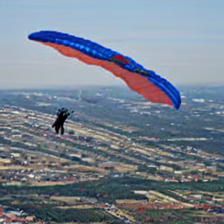} \\
&
\includegraphics[width=0.09\textwidth]{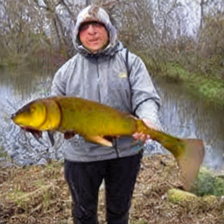} & 
\includegraphics[width=0.09\textwidth]{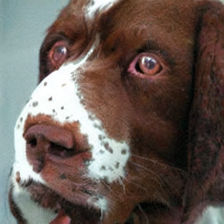} & 
\includegraphics[width=0.09\textwidth]{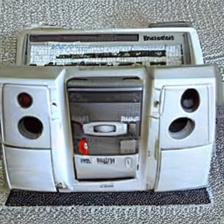} & 
\includegraphics[width=0.09\textwidth]{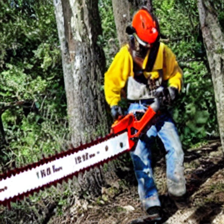} & 
\includegraphics[width=0.09\textwidth]{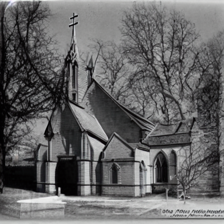} & 
\includegraphics[width=0.09\textwidth]{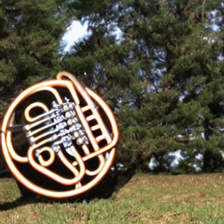} & 
\includegraphics[width=0.09\textwidth]{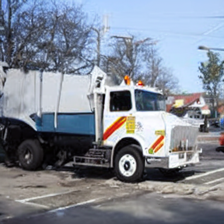} & 
\includegraphics[width=0.09\textwidth]{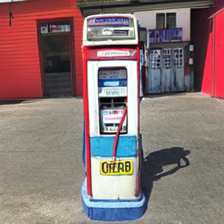} & 
\includegraphics[width=0.09\textwidth]{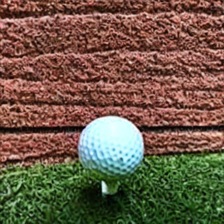} & 
\includegraphics[width=0.09\textwidth]{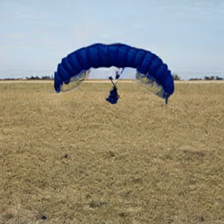} \\
\bottomrule
\end{tabular}}
\vspace{-0.25cm}
\caption{Qualitative results of unlearning and retaining images across Imagenette classes using the StableDiffusion Model. The top block shows generated samples for each class when it is targeted for unlearning, while the bottom block displays samples for the same classes when they are retained.
}
\label{fig:sd_main}
\vspace{-0.45cm}
\end{figure*}

\figref{fig:sd_main} shows qualitative results for class-wise unlearning on Stable Diffusion with the Imagenette dataset.  We observe that SSU effectively removes the model's ability to generate images corresponding to the unlearned classes, as evidenced by the lack of recognizable features on those classes, \eg, tench turns into a dog. At the same time, the generations of retained classes show high-fidelity and remain semantically accurate. This demonstrates that our approach preserves generative quality for non-targeted concepts. These results visually verify the effectiveness of our method in selectively unlearning specific classes while maintaining overall model utility.

\subsection{Random Subset Unlearning in LLM}
\label{sec:unlearn_llm}

Going beyond unlearning on computer vision tasks, we now evaluate SSU for unlearning LLM, demonstrating the general applicability of the method across models and tasks.

\myparagraph{Experimental setup and evaluation.}
Following the setup by~\citet{yuan2025closer}, we evaluate SSU on LLM unlearning using the TOFU benchmark~\cite{tofu} with Llama2-7B~\cite{llama2}. The TOFU benchmark consists of three scenarios: forget01, forget05, and forget10, corresponding to forgetting 1\%, 5\%, and 10\% of the training data, respectively. We consider two types of unlearning tasks: untargeted unlearning, including gradient-based methods (GA+GD, GA+KL), preference-based methods (NPO+GD, NPO+KL), and knowledge manipulation methods (ME+GD); and targeted unlearning, where we evaluate on preference-based methods (DPO+GD, DPO+KL) and knowledge manipulation methods (IDK+GD, IDK+AP). The baselines are adopted from~\citet{yuan2025closer}. SSU is built on top of the best-performing baseline for each task. 

As in prior work~\cite{yuan2025closer}, we evaluate performance using two aggregated metrics: Model Utility (MU) and Forget Efficacy (FE). These metrics combine multiple complementary signals, including ROUGE, prediction probability, truth ratio, token entropy, semantic similarity, and entailment score, to jointly assess lexical accuracy, semantic consistency, generation quality, and factual correctness. MU is computed on the retain set using the harmonic mean to measure overall utility preservation, while FE is computed on the forget set as one minus the arithmetic mean (excluding token entropy) to quantify the strength of forgetting. We further report their average as a single summary score. Implementation details are provided in Appx.~\ref{sec:appendix_llm}.

\myparagraph{Results.} 
\tabref{tab:llm_tofu} presents random subset unlearning on Llama2-7B across three forgetting scenarios. We apply SSU on top of the best baseline methods for each unlearning type. For untargeted unlearning, ME+GD emerges as the strongest baseline, significantly outperforming gradient-based (GA) and preference-based (NPO) alternatives. Applying SSU on top of ME+GD further enhances performance. SSU improves model utility for forget01 from 0.7245 to 0.7477 and forget10 from 0.7312 to 0.7480 while simultaneously improving forget efficacy across all scenarios, demonstrating superior balance between preserving model capabilities and effective unlearning. 

For targeted unlearning, IDK+AP serves as the best baseline with consistent performance. Building upon IDK+AP, SSU achieves the best results with average scores of 0.7856, 0.7562, and 0.7475 for the three scenarios. The improvements are particularly noticeable for forget01, where SSU enhances both model utility and forget efficacy. These results demonstrate that SSU generalizes effectively to large language models, consistently improving state-of-the-art unlearning methods for both untargeted and targeted scenarios across different forgetting ratios.

\begin{table*}[t]
\centering
\small
\setlength{\tabcolsep}{6pt}
\caption{Results of different unlearning methods on the TOFU benchmark with Llama2-7B. MU and FE represent Model Utility and Forget Efficacy, respectively, and we indicate the best results in bold.}
\label{tab:llm_tofu}
\renewcommand{\arraystretch}{1.15}
\resizebox{\textwidth}{!}{
\begin{tabular}{lc|cccccc|ccccc}
\specialrule{.15em}{.05em}{.05em}
\multirow{2}{*}{\textbf{Dataset}} &
\multirow{2}{*}{\textbf{Metric}} &
\multicolumn{11}{c}{\textbf{Method}} \\
\cmidrule(lr){3-13}
& & GA+GD & GA+KL & NPO+GD & NPO+KL & ME+GD & \cellcolor{gray!20}+ SSU & DPO+GD & DPO+KL & IDK+GD & IDK+AP & \cellcolor{gray!20}+ SSU \\
\midrule
\multirow{3}{*}{\textbf{forget01}} 
& MU   & 0.6671 & 0.6385 & 0.6402 & \underline{0.7404} & 0.7245 & \cellcolor{gray!20}\bf 0.7477 & 0.7554 & \underline{0.7601} & 0.6704 & 0.7579 & \cellcolor{gray!20}\bf 0.7769 \\
& FE   & 0.5935 & 0.6028 & 0.6137 & 0.4943 & \underline{0.9156} & \cellcolor{gray!20}\bf 0.9599 & 0.5260 & 0.3013 & \underline{0.7700} & 0.7625 & \cellcolor{gray!20}\bf 0.7943 \\
& Avg. & 0.6303 & 0.6206 & 0.6269 & 0.6174 & \underline{0.8201} & \cellcolor{gray!20}\bf 0.8538 & 0.6407 & 0.5307 & 0.7202 & \underline{0.7602} & \cellcolor{gray!20}\bf 0.7856 \\
\midrule
\multirow{3}{*}{\textbf{forget05}}
& MU   & 0.2913 & 0.0000 & 0.5718 & 0.5470 & \bf 0.7521 & \cellcolor{gray!20}\underline{0.7277} & 0.0000 & 0.4536 & 0.0000 & \bf 0.7522 & \cellcolor{gray!20}\underline{0.7515} \\
& FE   & 0.9135 & 0.8927 & 0.6967 & 0.6192 & \underline{0.9262} & \cellcolor{gray!20}\bf 0.9407 & \bf 0.8242 & 0.7831 & \underline{0.7948} & 0.7480 & \cellcolor{gray!20}0.7608 \\
& Avg. & 0.6024 & 0.4463 & 0.6342 & 0.5831 & \bf 0.8391 & \cellcolor{gray!20}\underline{0.8342} & 0.4121 & 0.6183 & 0.3974 & \underline{0.7501} & \cellcolor{gray!20}\bf 0.7562 \\
\midrule
\multirow{3}{*}{\textbf{forget10}}
& MU   & 0.5001 & 0.0000 & 0.5660 & 0.4904 & \underline{0.7312} & \cellcolor{gray!20}\bf 0.7480 & 0.0000 & 0.0000 & 0.0527 & \underline{0.7444} & \cellcolor{gray!20}\bf 0.7463 \\
& FE   & 0.9449 & 0.9484 & 0.7530 & 0.7449 & \underline{0.9505} & \cellcolor{gray!20}\bf 0.9557 & \underline{0.8043} & \bf 0.8346 & 0.7602 & 0.7432 & \cellcolor{gray!20}0.7487 \\
& Avg. & 0.7225 & 0.4742 & 0.6595 & 0.6177 & \underline{0.8409} & \cellcolor{gray!20}\bf 0.8519 & 0.4022 & 0.4173 & 0.4065 & \underline{0.7438} & \cellcolor{gray!20}\bf 0.7475 \\
\specialrule{.15em}{.05em}{.05em}
\end{tabular}
}
\vspace{-.45cm}
\end{table*}

\subsection{Ablation Studies}

\myparagraph{Effect of the keep ratio.}
To investigate the impact of the keep ratio on unlearning performance,
\begin{wraptable}{r}{0.5\linewidth}
\vspace{-0.70cm}
\centering
\small
\renewcommand{\arraystretch}{0.92}
\caption{Ablating top singular value ratio on random unlearning 10\% data on CIFAR-10.}
\label{tab:ablation_ratio}
\begin{tabular}{c|cccc}
\specialrule{.15em}{.05em}{.05em}
Keep Ratio & FA $\downarrow$ & RA $\uparrow$ & TA $\uparrow$ & MIA $\uparrow$ \\
\midrule
0.10 & 98.06 & 99.67 & 94.00 & 10.36 \\
0.30 & 96.24 & 99.26 & 93.39 & 12.66 \\
0.50 & 94.68 & 97.94 & 91.81 & 14.90 \\
0.60 & 95.46 & 98.56 & 92.74 & 14.64 \\
0.70 & 95.50 & 98.45 & 92.74 & 14.42 \\
0.90 & 94.54 & 97.69 & 91.96 & 13.98 \\
\specialrule{.15em}{.05em}{.05em}
\end{tabular}

\vspace{0.95cm}
\end{wraptable}
we conduct an ablation study by varying the ratio parameter in our method for random subset unlearning on CIFAR-10. As shown in \tabref{tab:ablation_ratio}, increasing the ratio generally improves forgetting efficacy, but excessively high values can degrade remaining and test accuracy, indicating a trade-off between unlearning strength and model utility. Our default setting of $\text{ratio}=0.3$ achieves the best balance, confirming the importance of careful ratio selection.
\WFclear\vspace{0.6\baselineskip}

\myparagraph{Impact of SVD-applied weight selection.}
We additionally ablate the effect of applying SVD to different subsets of weights in the U-Net architecture for text-to-image class-wise unlearning. \tabref{tab:ablation_module} reports results on Stable Diffusion with Imagenette, comparing SVD applied to all layers, only cross-attention layers, only convolutional layers, and only MLP layers. We observe that restricting SVD to cross-attention layers achieves the best trade-off between forgetting and generation quality.
\par
\begin{wraptable}[6]{r}{0.46\linewidth}
\vspace{-0.82cm}
\centering
\small
\setlength{\tabcolsep}{6pt}
\caption{Ablating different modules adapting SVD on class-wise unlearning on Stable Diffusion.}
\label{tab:ablation_module}
\footnotesize
\begin{adjustbox}{max width=\linewidth,center}
\begin{tabular}{c|cccc}
\specialrule{.15em}{.05em}{.05em}
 & Conv & MLP & Full & XAttn \\
\midrule
FA $\downarrow$  & 0.17 & 0.23 & 0.23 & 0.20 \\
FID $\downarrow$        & 1.29 & 1.29 & 1.28 & 1.24 \\
\specialrule{.15em}{.05em}{.05em}
\end{tabular}
\end{adjustbox}
\vspace{-0.2cm}

\end{wraptable}
In contrast, applying SVD to all layers slightly degrades generation quality, likely because it perturbs low-level visual features and global denoising dynamics that are less directly related to semantic concepts. This suggests that effective unlearning requires targeted modification of concept-carrying components rather than uniformly across the entire network. These results motivate our design choice of selectively applying SVD to cross-attention layers, which maximizes unlearning efficacy while preserving overall generative fidelity.
\WFclear

\section{Related Work}
\myparagraph{Forgetting-retention trade-off and spectral structure.}
The forgetting-retention trade-off is a central consideration in machine unlearning \citep{sekhari2021remember}. As updates that remove forget-set influence can degrade retained performance, unlearning inherits the same interference phenomenon studied as catastrophic forgetting in continual learning~\citep{li2019learn}. Many methods, therefore, use SVD-based projection to project the unlearning update onto a subspace orthogonal to an estimated retain subspace~\citep{wang2025gru, lin2024gdrgma, biswas2025cure, fang2025alphaedit, chen2024machine}, following parameter-isolation strategies from continual learning \citep{farajtabar2020orthogonal, bennani2020generalisation, saha2021gradient}.

SSU differs from these approaches in both object and mechanism. It operates directly on the singular basis of the unlearning gradient and suppresses weak spectral components. This yields an optimizer-agnostic mechanism that targets forget-retain interaction directly, rather than constraining updates solely through an estimated retain subspace. Related work also studies forget-retain conflict via Euclidean gradient alignment between the forgetting and retaining gradients \citep{patel2025learning, wang2025rethinking, asif2026ofmu}, whereas SSU analyzes their interaction in the spectral basis of the unlearning gradient and uses the resulting spectrum as a structured proxy for interference. In addition, \cite{sendera2025semu} applies SVD-based low-rank parameterization for efficient updates, while SSU uses singular values as a saliency signal for thresholding.

\myparagraph{Saliency-based MU.} 
Existing saliency-based unlearning selectively updates a subset of parameters deemed most responsible for forget-set behavior. Motivated by evidence that model sparsity improves unlearning \citep{savani2025antidistillation}, SalUn \citep{fan2024salun} computes a coordinate-wise saliency map from the forget-set gradient evaluated with the pretrained model, and thresholds low-magnitude coordinates to sparsify unlearning updates. Building on SalUn, \cite{ding2025understanding} further argues via a fine-tuning analysis that saliency should instead be derived from retained data to better preserve overlapping features and mitigate the forgetting-retention trade-off. 

Unlike coordinate masking, SSU defines spectral saliency using the singular values of the unlearning gradient, selecting dominant matrix-valued update directions that reflect signals from both forgetting and retention. \cite{huang2024unified} develops a unified view of saliency by decomposing the unlearning objective into a forgetting term, a retention term, and an explicit weight-saliency matrix that modulates the unlearning direction. Overall, while saliency-driven sparsification is empirically effective, it remains largely a heuristic. Our analysis offers a complementary theoretical explanation for why saliency-based masking can improve the forgetting-retention trade-off.

\myparagraph{General MU}
has been explored through a range of optimization-based approaches, including gradient ascent methods that increase the forget-set loss~\citep{thudi2022unrolling, neel2021descent}, influence-function-based approximations to leave-one-out retraining~\citep{izzo2021approximate, koh2017understanding}, and Fisher-information-based selective updates~\citep{golatkar2020eternal, becker2022evaluating}. More recently, practical approximate unlearning has increasingly relied on fine-tuning \citep{warnecke2021machine, golatkar2020eternal, kurmanji2023towards}, particularly for large models where full retraining is prohibitive. These techniques have been developed across modalities, including image classification models \citep{jia2023model, fan2024salun}, generative image models such as diffusion models \citep{gandikota2023erasing, zhang2024forget, heng2023selective, wu2025erasing}, and LLMs \citep{zhang2024negative, yuan2025closer, liu2025rethinking}. 

Despite their differences, many existing methods can be expressed under a common template that jointly optimizes a forgetting loss and a retention or utility-preserving loss \citep{fan2024salun, yuan2025closer, zhong2025dualoptim}. SSU can be used as a plug-in enhancement to methods of this structure, as we have demonstrated in~\secref{sec:exp}.

\vspace{0.5pt}

\section{Conclusion}
Motivated by a spectral view of gradient-based optimization, we proposed Spectral Saliency Unlearning (SSU), which applies singular-value thresholding to suppress weak spectral components of the unlearning gradient and can be used as a drop-in enhancement for gradient-based unlearning pipelines. We provided a theoretical justification for SSU from the perspective of the forgetting-retention trade-off, characterizing how weak directions relate to forget-retain interference and suppressing them can mitigate utility degradation. We further extended this perspective to coordinate-wise thresholding, offering a nontrivial explanation for the empirical effectiveness of SalUn-style masking. Empirically, we evaluated SSU across diverse unlearning settings, including image classification, diffusion-model unlearning, and large language model unlearning. Across these settings, SSU consistently demonstrates its effectiveness in unlearning and improves the forgetting-utility trade-off.

{\small
\bibliographystyle{ieeenat_fullname}
\bibliography{ref,ref_imma}
}

\appendix
\clearpage
\appendix

{\bf \Large Appendix}

\noindent The appendix is organized as follows:
\begin{itemize}[noitemsep,leftmargin=*,topsep=0em]
    \item In~\secref{sec:proof}, we provide the formal assumptions, statements, and complete proofs for the Theorems stated in the main paper.
    \item In~\secref{sec:supp_exp}, we provide additional experiment details. The code will be open-sourced upon the acceptance of this paper.
    \item In~\secref{sec:supp_results}, we provide additional image generation results after unlearning with SSU.
\end{itemize}

\section{Proof of the Theoretical Justification} \label{sec:proof}

\subsection{Formal Statement and Proof of Proposition \ref{prop:conflict}} \label{sec:appendix_ssu_1}

For the unlearning objective $\cL_u = \cL_f + \cL_r$ composed of a forgetting loss $\cL_f$ and a retaining loss $\cL_r$, we have for their gradients $\mG_u = \mG_f + \mG_r$. The SVD of $\mG_u$ is given by $\mG_u = \mU \diag\pa{\bm{\sigma}}\mV = \sum_{i=1}^m \sigma_i \vu_i \vv_i^\top$. Let $r_e=\mathrm{rank}\pa{G_u}$. Then we have $\forall i \in [r_e]$, $\sigma_i > 0$ and $\forall i \in \set{r_e+1, \cdots, m}$, $\sigma_i = 0$. In practice, in the context of effective rank, we let the former represent significant non-zero singular values and the latter include those that are approximately zero. We have compact SVD $\mG_u = \mU_e\mathrm{diag}\pa{\bm{\sigma}_e}{\mV_e}^\top$ where $\mU_e \in \R^{m \times r_e}$ and $\mV_e \in \R^{n \times r_e}$.

\begin{assumption} [Smoothness] \label{asm:smooth} For constants $\beta_f, \beta_r > 0$, the forgetting loss $\cL_f$ and the retaining loss $\cL_r$ are smooth, i.e., for $\mW$, $\mW'$,
\begin{align*}
    \abs{\cL_f\pa{\mW'} - \cL_f\pa{\mW} - \inner{\mG_f\pa{\mW}}{\mW' - \mW}} \leq \frac{\beta_f}{2}\norm{\mW' - \mW}^2, \\
    \abs{\cL_r\pa{\mW'} - \cL_r\pa{\mW} - \inner{\mG_r\pa{\mW}}{\mW' - \mW}} \leq \frac{\beta_r}{2}\norm{\mW' - \mW}^2.
\end{align*}
\end{assumption}

\defalignment*

\begin{lemma} \label{lem:vec_inner}
    If $a\pa{\vv_i} < -\frac{\min\set{\norm{\mG_f\vv_i}, \norm{\mG_r\vv_i}}}{\max\set{\norm{\mG_f\vv_i}, \norm{\mG_r\vv_i}}}$, then for $f_i = \inner{\mG_f}{\vu_i\vv_i^\top}$, $r_i = \inner{\mG_r}{\vu_i\vv_i^\top}$, we have $f_ir_i < 0$.
\end{lemma}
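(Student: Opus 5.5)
The plan is to reduce everything to two vectors in $\R^m$ and use that $\vv_i$ is a right singular vector of $\mG_u$. Set $\vx = \mG_f\vv_i$ and $\vy = \mG_r\vv_i$. I assume $i \le r_e$, so $\sigma_i>0$, and that both $\vx$ and $\vy$ are nonzero, as is implicit for $a\pa{\vv_i}$ to be defined. (The macros $\vx,\vy$ are not defined in the paper, so in the written proof I will use $\mG_f\vv_i$ and $\mG_r\vv_i$ directly.)

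\textbf{Step 1: rewrite $f_i$ and $r_i$.} Since $\mG_u\vv_i = \sigma_i\vu_i$, we get $\vx+\vy = \sigma_i\vu_i$, hence $\norm{\vx+\vy} = \sigma_i$. Using $\inner{\mM}{\vu\vv^\top} = \vu^\top\mM\vv$, we have $f_i = \vu_i^\top\vx$ and $r_i = \vu_i^\top\vy$. Substituting $\vu_i = (\vx+\vy)/\sigma_i$ gives
\begin{align*}
f_i = \frac{\norm{\vx}^2 + \inner{\vx}{\vy}}{\sigma_i}, \qquad r_i = \frac{\norm{\vy}^2 + \inner{\vx}{\vy}}{\sigma_i}, \qquad f_i + r_i = \sigma_i > 0.
\end{align*}

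\textbf{Step 2: use the hypothesis.} Multiplying the hypothesis on $a\pa{\vv_i}$ by $\norm{\vx}\norm{\vy}>0$ gives
\begin{align*}
\inner{\vx}{\vy} < -\frac{\min\set{\norm{\vx},\norm{\vy}}}{\max\set{\norm{\vx},\norm{\vy}}}\norm{\vx}\norm{\vy} = -\min\set{\norm{\vx},\norm{\vy}}^2 .
\end{align*}
Consider the vector of smaller norm. If it is $\vx$, then $\norm{\vx}^2 + \inner{\vx}{\vy} < 0$, so $f_i<0$. If it is $\vy$, the same argument gives $r_i<0$.

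\textbf{Step 3: conclude.} Since $f_i + r_i = \sigma_i > 0$, the other quantity must be strictly positive. Hence $f_i r_i < 0$.

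The only delicate point is Step 1: recognizing that $f_i+r_i=\sigma_i$ and that $\vu_i$ is parallel to $\vx+\vy$. Once this is written down, the rest is the one-line algebra above. I would also record the standing assumptions ($\sigma_i>0$ and nonzero projections) explicitly.
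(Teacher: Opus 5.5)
Your proof is correct and follows the paper's argument: both write $f_i$ and $r_i$ as $\bigl(\|\mG_f\vv_i\|^2 + \inner{\mG_f\vv_i}{\mG_r\vv_i}\bigr)/\sigma_i$ and its counterpart via $\vu_i = (\mG_f\vv_i + \mG_r\vv_i)/\sigma_i$, and both use the hypothesis to make the factor of the smaller-norm projection negative. The only difference is the last step: the paper shows the other factor is positive by Cauchy--Schwarz, $\inner{\mG_f\vv_i}{\mG_r\vv_i} \ge -\|\mG_f\vv_i\|\|\mG_r\vv_i\| > -\|\mG_f\vv_i\|^2$ (after assuming the norms are strictly ordered), whereas you use $f_i + r_i = \sigma_i > 0$, which is slightly cleaner and handles the equal-norm case without extra comment.
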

\begin{proof}
    By definition,
\begin{align*}
    f_i = \inner{\mG_f}{\vu_i\vv_i^\top} = \vu_i^\top \mG_f \vv_i, && r_i = \inner{\mG_r}{\vu_i\vv_i^\top} = \vu_i^\top \mG_r \vv_i 
\end{align*}
Since $\mG_f \vv_i + \mG_r \vv_i = \mG_u \vv_i = \sum_{j=1}^{r_e} \sigma_j \vu_j \vv_j^\top \vv_i = \sigma_i \vu_i$, we know $\vu_i = \frac{\mG_f \vv_i + \mG_r \vv_i}{\sigma_i}$. Therefore,
\begin{align*}
    f_i = \vu_i^\top \mG_f \vv_i = \frac{\pa{\mG_f \vv_i + \mG_r \vv_i}^\top \mG_f \vv_i}{\sigma_i} = \frac{\norm{\mG_f \vv_i}_2^2 + \inner{\mG_r \vv_i}{\mG_f \vv_i}}{\sigma_i}.
\end{align*}
Similarly, $r_i = \frac{\norm{\mG_r \vv_i}_2^2 + \inner{\mG_r \vv_i}{\mG_f \vv_i}}{\sigma_i}$. 

If $a\pa{\vv_i} = \frac{\inner{\mG_f\vv_i}{\mG_r\vv_i}}{\norm{\mG_f\vv_i}_2\norm{\mG_r\vv_i}_2} < -\frac{\min\set{\norm{\mG_f\vv_i}, \norm{\mG_r\vv_i}}}{\max\set{\norm{\mG_f\vv_i}, \norm{\mG_r\vv_i}}}$, without the loss of generality, we assume $\norm{\mG_f\vv_i} > \norm{\mG_r\vv_i}$, then $\frac{\inner{\mG_f\vv_i}{\mG_r\vv_i}}{\norm{\mG_f\vv_i}_2\norm{\mG_r\vv_i}_2} < -\frac{\norm{\mG_r\vv_i}}{\norm{\mG_f\vv_i}}$ yields $\inner{\mG_f\vv_i}{\mG_r\vv_i} < - \norm{\mG_r\vv_i}^2$. Also, we have by Cauchy-Schwarz inequality,
\begin{align*}
    \inner{\mG_f\vv_i}{\mG_r\vv_i} \geq - \norm{\mG_f\vv_i}_2\norm{\mG_r\vv_i}_2 > - \norm{\mG_f\vv_i}_2^2.
\end{align*}
Therefore, we know $\inner{\mG_f\vv_i}{\mG_r\vv_i} + \norm{\mG_r\vv_i}^2 < 0$ and $\inner{\mG_f\vv_i}{\mG_r\vv_i} + \norm{\mG_f\vv_i}_2^2 > 0$. As a result, we have
\begin{align*}
    f_i r_i = \frac{\pa{\norm{\mG_f \vv_i}_2^2+\inner{\mG_r \vv_i}{\mG_f \vv_i}}\pa{\norm{\mG_r \vv_i}_2^2+\inner{\mG_r \vv_i}{\mG_f \vv_i}}}{\sigma_i^2} < 0.
\end{align*}
\end{proof}

\begin{lemma} \label{lem:conflict}
    For gradient $\mG_u = \eta \sum_{i=1}^{r_e} \sigma_i \vu_i\vv_i^\top$, the update in direction $\vv_i$ is given by $\mW' = \mW - \eta \sigma_i \vu_i\vv_i^\top$. For $f_i = \inner{\mG_f}{\vu_i\vv_i^\top}$, $r_i = \inner{\mG_r}{\vu_i\vv_i^\top}$, $\eta < \min_i \set{\frac{2\abs{f_i}}{\beta_f\sigma_i}, \frac{2\abs{r_i}}{\beta_r\sigma_i}}$, forgetting loss $\cL_f$, and retaining loss $\cL_r$, we have
    \begin{itemize}
        \item [{\bf (a)}] $\cL_f\pa{\mW'} - \cL_f\pa{\mW} < 0$ implies $f_i > 0$, and $f_i < 0$ implies $\cL_f\pa{\mW'} - \cL_f\pa{\mW} > 0$.
        \item [{\bf (b)}] $\cL_r\pa{\mW'} - \cL_r\pa{\mW} < 0$ implies $r_i > 0$, and $r_i < 0$ implies $\cL_r\pa{\mW'} - \cL_r\pa{\mW} > 0$.
    \end{itemize}
\end{lemma}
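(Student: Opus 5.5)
Since the step $\mW' - \mW = -\eta\sigma_i \vu_i\vv_i^\top$ is rank one, the plan is to apply the smoothness bound of Assumption~\ref{asm:smooth} along it and show that the first-order term dominates the quadratic remainder. First we compute the two ingredients. The inner product is $\inner{\mG_f}{\mW'-\mW} = -\eta\sigma_i \inner{\mG_f}{\vu_i\vv_i^\top} = -\eta\sigma_i f_i$. Because $\vu_i,\vv_i$ are unit vectors, $\norm{\vu_i\vv_i^\top}_F = 1$, so $\norm{\mW'-\mW}^2 = \eta^2\sigma_i^2$. Assumption~\ref{asm:smooth} then gives the two-sided bound
\begin{align*}
-\eta\sigma_i f_i - \tfrac{\beta_f}{2}\eta^2\sigma_i^2 \;\le\; \cL_f\pa{\mW'} - \cL_f\pa{\mW} \;\le\; -\eta\sigma_i f_i + \tfrac{\beta_f}{2}\eta^2\sigma_i^2 .
\end{align*}
An identical bound holds for $\cL_r$, with $f_i$ replaced by $r_i$ and $\beta_f$ by $\beta_r$.

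The key step is to factor the bound as $\eta\sigma_i\bigl(-f_i \pm \tfrac{\beta_f}{2}\eta\sigma_i\bigr)$, using $\eta\sigma_i>0$ because $i\le r_e$. The step-size condition $\eta < \frac{2\abs{f_i}}{\beta_f\sigma_i}$ is exactly $\tfrac{\beta_f}{2}\eta\sigma_i < \abs{f_i}$. Under this condition the sign of the whole expression is the sign of $-f_i$, so:
\begin{itemize}
\item If $f_i>0$, the upper bound is strictly negative, so $\Delta\cL_f<0$.
\item If $f_i<0$, the lower bound $\eta\sigma_i\bigl(\abs{f_i}-\tfrac{\beta_f}{2}\eta\sigma_i\bigr)$ is strictly positive, so $\Delta\cL_f>0$.
\end{itemize}
This proves the second implication of (a) directly. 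For the first implication, we take the contrapositive. If $f_i<0$, then $\Delta\cL_f>0$ as just shown. If $f_i=0$, the condition forces $\eta<0$, which is impossible, so this case is vacuous. Hence $\Delta\cL_f<0$ implies $f_i>0$. Part (b) follows by repeating the argument verbatim with $r_i$ and $\beta_r$.

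The main obstacle is bookkeeping rather than mathematics. We need to confirm that the norm in Assumption~\ref{asm:smooth} is Frobenius, so that the rank-one step has norm $\eta\sigma_i$. We also need to handle the degenerate case $f_i=0$ (or $r_i=0$), where the step-size bound becomes empty. Finally, the stray factor $\eta$ in the lemma's expression for $\mG_u$ should be read as the learning-rate scaling of the update, not as part of the gradient.
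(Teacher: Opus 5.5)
Your proposal is correct and matches the paper's proof. Both apply Assumption~\ref{asm:smooth} to the rank-one step, which has $\norm{\mW'-\mW}=\eta\sigma_i$ and linear term $-\eta\sigma_i f_i$, and use $\tfrac{\beta_f}{2}\eta\sigma_i<\abs{f_i}$ to make the lower bound $-\eta\sigma_i f_i-\tfrac{\beta_f}{2}\eta^2\sigma_i^2$ positive when $f_i<0$ (the paper phrases the first implication as a contradiction rather than a contrapositive). Your explicit treatment of $f_i=0$, where the step-size condition cannot hold for $\eta>0$, is slightly more careful than the paper: it lumps $f_i\le 0$ into one inequality that does not actually give strict positivity at $f_i=0$ without that vacuity remark.
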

\begin{proof}
    (a) We prove by contradiction. Suppose when $\cL_f\pa{\mW'} - \cL_f\pa{\mW} < 0$, $f_i \leq 0$. For the update $\mW' = \mW + \Delta\mW$ where $\Delta \mW = - \eta \sigma_i \vu_i\vv_i^\top$, we have $\norm{\Delta\mW} = \eta \sigma_i$. By Assumption \ref{asm:smooth}, we have
\begin{align*}
    \cL_f\pa{\mW'} - \cL_f\pa{\mW} &\geq \inner{\mG_f}{\Delta\mW} - \frac{\beta_f}{2}\norm{\Delta\mW}^2 \\
    &= - \eta \sigma_i \inner{\mG_f}{\vu_i\vv_i^\top} - \frac{\beta_f}{2} \eta^2\sigma_i^2 \\
    &= - \eta\sigma_i f_i - \frac{\beta_f}{2} \eta^2\sigma_i^2 \\
    &> 0,
\end{align*}
where the last inequality follows from $\eta < \frac{2\abs{f_i}}{\beta_f\sigma_i}$. This contradicts the condition $\cL_f\pa{\mW'} - \cL_f\pa{\mW} < 0$. Therefore, when $\cL_f\pa{\mW'} - \cL_f\pa{\mW} < 0$, we must have $f_i > 0$. In addition, from the derivation above, we know $f_i = \inner{\mG_f}{\vu_i\vv_i^\top} \leq 0$ implies $\cL_f\pa{\mW'} - \cL_f\pa{\mW} > 0$.

(b) Suppose when $\cL_r\pa{\mW'} - \cL_r\pa{\mW} < 0$, $r_i \leq 0$. By Assumption \ref{asm:smooth}, we have
\begin{align*}
    \cL_r\pa{\mW'} - \cL_r\pa{\mW} &\geq \inner{\mG_r}{\Delta\mW} - \frac{\beta_r}{2}\norm{\Delta\mW}^2 \\
    &= - \eta \sigma_i \inner{\mG_r}{\vu_i\vv_i^\top} - \frac{\beta_r}{2} \eta^2\sigma_i^2 \\
    &= - \eta\sigma_i r_i - \frac{\beta_r}{2} \eta^2\sigma_i^2 \\
    &> 0,
\end{align*}
where the last inequality follows from $\eta < \frac{2\abs{r_i}}{\beta_r\sigma_i}$. This contradicts the condition $\cL_r\pa{\mW'} - \cL_r\pa{\mW} < 0$. Therefore, when $\cL_r\pa{\mW'} - \cL_r\pa{\mW} < 0$, we must have $r_i > 0$. In addition, from the derivation above, we know $r_i = \inner{\mG_f}{\vu_i\vv_i^\top} \leq 0$ implies $\cL_r\pa{\mW'} - \cL_r\pa{\mW} > 0$.
\end{proof}

\begingroup
\renewcommand{\thesection}{\ref{sec:ssu_theory}}
\renewcommand{\theHproposition}{appendix.conflict}
\setcounter{proposition}{1}
\begin{proposition}
 For gradient of the unlearning objective $\mG_u = \eta \sum_{i=1}^{r_e} \sigma_i \vu_i\vv_i^\top$, the update in direction $\vv_i$ is given by $\mW' = \mW - \eta \sigma_i \vu_i\vv_i^\top$. Under Assumption \ref{asm:smooth}, if $a\pa{\vv_i} < -\frac{\min\set{\norm{\mG_f\vv_i}, \norm{\mG_r\vv_i}}}{\max\set{\norm{\mG_f\vv_i}, \norm{\mG_r\vv_i}}}$, and $\eta < \min_i \set{\frac{2\abs{f_i}}{\beta_f\sigma_i}, \frac{2\abs{r_i}}{\beta_r\sigma_i}}$ for $f_i = \inner{\mG_f}{\vu_i\vv_i^\top}$ and $r_i = \inner{\mG_r}{\vu_i\vv_i^\top}$, then we have {\bf (a)} if $\Delta \cL_f = \cL_f\pa{\mW'} - \cL_f\pa{\mW} < 0$ then $\Delta \cL_r = \cL_r\pa{\mW'} - \cL_r\pa{\mW} > 0$; {\bf (b)} if $\Delta \cL_r < 0$, then $\Delta \cL_f > 0$. That is, along any singular direction $\vv_i$ where the retain and forget gradients are in significant conflict, any update that improves one objective must necessarily worsen the other.
\end{proposition}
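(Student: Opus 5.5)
The plan is to combine Lemma~\ref{lem:vec_inner} and Lemma~\ref{lem:conflict}, which together contain essentially all the work. First, under the alignment hypothesis $a\pa{\vv_i} < -\frac{\min\set{\norm{\mG_f\vv_i}, \norm{\mG_r\vv_i}}}{\max\set{\norm{\mG_f\vv_i}, \norm{\mG_r\vv_i}}}$, Lemma~\ref{lem:vec_inner} gives $f_i r_i < 0$. Thus exactly one of $f_i, r_i$ is strictly positive and the other strictly negative; in particular both are nonzero. This matters because the step-size bound $\eta < \min\set{\frac{2\abs{f_i}}{\beta_f\sigma_i}, \frac{2\abs{r_i}}{\beta_r\sigma_i}}$ is then a nonvacuous positive threshold. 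We also have $\sigma_i>0$, since $i\le r_e$.

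For part (a), suppose $\Delta\cL_f < 0$. By Lemma~\ref{lem:conflict}(a) this forces $f_i > 0$, and since $f_ir_i<0$ we get $r_i < 0$. The second implication of Lemma~\ref{lem:conflict}(b), that $r_i<0$ implies $\Delta\cL_r > 0$, then gives the conclusion. I would re-verify that implication rather than cite it blindly. The lower bound from Assumption~\ref{asm:smooth} gives
\begin{align*}
\Delta\cL_r \ge -\eta\sigma_i r_i - \tfrac{\beta_r}{2}\eta^2\sigma_i^2 = \eta\sigma_i\pa{\abs{r_i} - \tfrac{\beta_r}{2}\eta\sigma_i},
\end{align*}
which is strictly positive precisely when $\eta < 2\abs{r_i}/(\beta_r\sigma_i)$.

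Part (b) is symmetric. $\Delta\cL_r<0$ forces $r_i>0$ by Lemma~\ref{lem:conflict}(b), hence $f_i<0$, and Lemma~\ref{lem:conflict}(a) then yields $\Delta\cL_f>0$. Combining (a) and (b), whenever either objective strictly improves the other strictly worsens, which is the informal claim $\Delta\cL_f\Delta\cL_r<0$ whenever one of them is negative.

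There is no deep obstacle. The only delicate point is bookkeeping about the orientation of the update and of the signs. The update is $-\eta\sigma_i\vu_i\vv_i^\top$, so a positive $f_i$ means the step is locally a descent direction for $\cL_f$. One should also note that the per-direction step size $\eta\sigma_i$ is what enters $\norm{\Delta\mW}$, consistent with the threshold. I would also remark that the $\min_i$ in the step-size condition is only needed for the fixed index $i$ under consideration.
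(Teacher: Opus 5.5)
Your proposal is correct and is the paper's proof: Lemma~\ref{lem:vec_inner} gives $f_i r_i<0$, and Lemma~\ref{lem:conflict} moves from the sign of $\Delta\cL_f$ (resp.\ $\Delta\cL_r$) to the sign of $f_i$ (resp.\ $r_i$), then via $f_ir_i<0$ back to the opposite sign of the other loss change. Your direct re-check of $\Delta\cL_r \ge \eta\sigma_i\pa{\abs{r_i} - \tfrac{\beta_r}{2}\eta\sigma_i} > 0$ is worth keeping, since the paper's Lemma~\ref{lem:conflict}(b) mistakenly writes $\mG_f$ where $\mG_r$ is meant.
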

\endgroup
\begin{proof}
(a) If the update makes progress in forgetting, that is, $\Delta \cL_f = \cL_f\pa{\mW'} - \cL_f\pa{\mW} < 0$, then by Lemma \ref{lem:conflict} (a), we know $f_i > 0$. Also, from Lemma \ref{lem:vec_inner}, we know that if the retain and forget gradients are in significant conflict in the effective space, i.e., $a\pa{\vv_i} < -\frac{\min\set{\norm{\mG_f\vv_i}, \norm{\mG_r\vv_i}}}{\max\set{\norm{\mG_f\vv_i}, \norm{\mG_r\vv_i}}}$, we have $f_ir_i < 0$. Since $f_i > 0$, we must have $r_i < 0$. Then by Lemma \ref{lem:conflict} (b), we have $\cL_r\pa{\mW'} - \cL_r\pa{\mW} > 0$, meaning that the effort for retention is damaged.

(b) If the update makes progress in retention, that is, $\Delta \cL_r = \cL_r\pa{\mW'} - \cL_r\pa{\mW} < 0$, then by Lemma \ref{lem:conflict} (b), we know $r_i > 0$. Again, by Lemma \ref{lem:vec_inner}, we know if $a\pa{\vv_i} < -\frac{\min\set{\norm{\mG_f\vv_i}, \norm{\mG_r\vv_i}}}{\max\set{\norm{\mG_f\vv_i}, \norm{\mG_r\vv_i}}}$, we have $f_ir_i < 0$. Since $r_i > 0$, we must have $f_i < 0$. Then by Lemma \ref{lem:conflict} (a), we have $\cL_f\pa{\mW'} - \cL_f\pa{\mW} > 0$, meaning that unlearning is not making progress in forgetting.
\end{proof}

\subsection{Formal Statement and Proof of Proposition \ref{prop:small_singular}} \label{sec:appendix_ssu_2}

For compact SVD $\mG_u = \mU_e\mathrm{diag}\pa{\bm{\sigma}_e}{\mV_e}^\top$ where $\mU_e \in \R^{m \times r_e}$ and $\mV_e \in \R^{n \times r_e}$, we call the space spanned by $\mV_e$ the effective subspace, $\mathcal{E} = \mathrm{span}\pa{\mV_e}$, and the corresponding projection $\mP = \mV_e \mV_e^\top$. Consider the Gram matrix of $\mG_r$ and $\mG_f$ projected to the effective subspace $\mM_r = \mP\mG_r^\top\mG_r\mP \in \R^{r_e\times r_e}$ and $\mM_f = \mP\mG_f^\top\mG_f\mP \in \R^{r_e\times r_e}$, their eigen-decompositions are defined as $\mM_f = \sum_{j=1}^{r_e} \lambda_{f,j} \ve_{f,j} \ve_{f,j}^\top$, $\mM_r = \sum_{j=1}^{r_e} \lambda_{r,j} \ve_{r,j} \ve_{r,j}^\top$.

\begin{assumption} [Bounded Spectral Disparity] \label{asm:disparity}
    $\mM_r$ and $\mM_f$ exhibit bounded spectral disparity. Specifically, there exist constants $\alpha_f^- < 1 < \alpha_f^+, \alpha_r^- < 1 < \alpha_r^+$ but close enough to $1$ such that for the maximum, minimum, and average eigenvalues of $\mM_r, \mM_f$,
    \begin{align*}
         \alpha_f^- \lambda_f^{\tt max} \leq \lambda_f^{\tt avg} \leq \alpha_f^+\lambda_f^{\tt min}, && \alpha_r^- \lambda_r^{\tt max} \leq \lambda_r^{\tt avg} \leq \alpha_r^+\lambda_r^{\tt min}.
    \end{align*}
\end{assumption}
Bounded spectral disparity formalizes that the projected retain and forget signals are bulk-distributed on the effective space $\mathcal{E}$, and the spectrum within $\mathcal{E}$ is not so heavy-tailed that $\vv^\top \mM_r \vv$ or $\vv^\top \mM_f \vv$ varies by orders of magnitude across admissible directions. This is a natural regime for subspace-based unlearning, as $\mathcal{E}$ is defined from the joint gradient $\mG_u$, so it filters out directions where the update signal is negligible and where task-specific curvature can be extremely ill-conditioned. Empirically, the projected spectra within $\mathcal E$ are typically far less heavy-tailed than in the full parameter space.

\begin{lemma} \label{lem:concentration}
    Under Assumption \ref{asm:disparity}, for $\epsilon = \max\set{\frac{1}{\alpha_f^-}-1, 1-\frac{1}{\alpha_f^+}, \frac{1}{\alpha_r^-}-1, 1-\frac{1}{\alpha_r^+}}$, we have $\forall i \in [r_e]$, 
    \begin{align*}
        \pa{1-\epsilon} \lambda_f^{\tt avg} \leq \vv_i^\top \mM_f \vv_i \leq \pa{1+\epsilon} \lambda_f^{\tt avg}, && \pa{1-\epsilon} \lambda_r^{\tt avg} \leq \vv_i^\top \mM_r \vv_i \leq \pa{1+\epsilon} \lambda_r^{\tt avg}.
    \end{align*}
\end{lemma}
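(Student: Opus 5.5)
The plan is to sandwich the Rayleigh quotient $\vv_i^\top \mM_f \vv_i$ between the extreme eigenvalues of $\mM_f$, convert those into bounds involving $\lambda_f^{\tt avg}$ via Assumption~\ref{asm:disparity}, and then absorb the constants into $\epsilon$. The same argument applies verbatim to $\mM_r$, so only the forgetting case needs to be written out.

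First, since $\vv_i \in \mathcal{E} = \mathrm{span}(\mV_e)$ is a unit vector, we have $\mP\vv_i = \vv_i$. Hence $\vv_i^\top \mM_f \vv_i = \norm{\mG_f \vv_i}^2$, and this is a Rayleigh quotient of the symmetric PSD operator $\mM_f$ restricted to $\mathcal{E}$. The eigenvalues $\lambda_{f,j}$, $j\in[r_e]$, are understood as those of $\mM_f$ on $\mathcal{E}$, where $\mM_f$ acts as an $r_e\times r_e$ matrix. Expanding $\vv_i = \sum_j c_j \ve_{f,j}$ with $\sum_j c_j^2 = 1$ gives
\[
\vv_i^\top \mM_f \vv_i = \sum_j \lambda_{f,j} c_j^2 \in [\lambda_f^{\tt min}, \lambda_f^{\tt max}].
\]
Second, Assumption~\ref{asm:disparity} gives $\lambda_f^{\tt max} \le \lambda_f^{\tt avg}/\alpha_f^-$ and $\lambda_f^{\tt min} \ge \lambda_f^{\tt avg}/\alpha_f^+$. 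Combining these with the first step yields
\[
\tfrac{1}{\alpha_f^+}\lambda_f^{\tt avg} \le \vv_i^\top \mM_f \vv_i \le \tfrac{1}{\alpha_f^-}\lambda_f^{\tt avg}.
\]
Third, we rewrite the constants. We have $\frac{1}{\alpha_f^-} = 1 + (\frac{1}{\alpha_f^-}-1) \le 1+\epsilon$ and $\frac{1}{\alpha_f^+} = 1 - (1-\frac{1}{\alpha_f^+}) \ge 1-\epsilon$, by the definition of $\epsilon$ as the maximum. Since $\lambda_f^{\tt avg} \ge 0$ because $\mM_f$ is PSD, multiplying through preserves the inequalities and gives the claim. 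Repeating the argument with $r$ in place of $f$ finishes the proof.

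The only real subtlety is the first step. We must make sure the eigenvalues are those of $\mM_f$ restricted to $\mathcal{E}$ rather than of the $n\times n$ matrix $\mP\mG_f^\top\mG_f\mP$. The full matrix has zero eigenvalues on $\mathcal{E}^\perp$, which would force $\lambda_f^{\tt min}=0$ and make the assumption vacuous. With the restricted interpretation, the result is just the standard Courant–Fischer bound together with bookkeeping of constants.
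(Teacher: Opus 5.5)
Your proposal is correct and follows the paper's proof essentially step for step. You sandwich the Rayleigh quotient $\vv_i^\top \mM_f \vv_i$ between $\lambda_f^{\tt min}$ and $\lambda_f^{\tt max}$ via the eigen-expansion, convert these to $\lambda_f^{\tt avg}/\alpha_f^{\pm}$ using Assumption~\ref{asm:disparity}, and absorb the constants into $\epsilon$; your point that the eigenvalues must be those of $\mM_f$ restricted to $\mathcal{E}$ is exactly what the paper implicitly uses when it writes $\sum_{j=1}^{r_e}\inner{\ve_{f,j}}{\vv_i}^2 = 1$, and your write-up also avoids the paper's typo of $\lambda_r^{\tt avg}$ for $\lambda_f^{\tt avg}$ in the upper bound.
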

\begin{proof}
    Given that $\mM_f = \sum_{j=1}^{r_e} \lambda_{f,j} \ve_{f,j} \ve_{f,j}^\top$, we have
    \begin{align*}
        \vv_i^\top \mM_f \vv_i = \sum_{j=1}^{r_e} \lambda_{f,j} \pa{\inner{\ve_{f,j}}{\vv_i}}^2 \leq \lambda_f^{\tt max} \sum_{j=1}^{r_e} \pa{\inner{\ve_{f,j}}{\vv_i}}^2 \leq \lambda_f^{\tt max}
    \end{align*}
    as $\ve_{f,j}$ forms an orthonormal basis and $\vv_i$ is a unit vector so that $\sum_{j=1}^{r_e} \pa{\inner{\ve_{f,j}}{\vv_i}}^2 = 1$. Similarly, we have
    \begin{align*}
       \lambda_f^{\tt min} \leq \vv_i^\top \mM_f \vv_i \leq \lambda_f^{\tt max}, && \lambda_r^{\tt min} \leq \vv_i^\top \mM_r \vv_i \leq \lambda_r^{\tt max}.
    \end{align*}
    Given Assumption \ref{asm:disparity}, we know $\lambda_r^{\tt max} \leq \frac{1}{\alpha_f^-}\lambda_r^{\tt avg}$ and by definition, $\epsilon \geq \frac{1}{\alpha_f^-}-1$, therefore
    \begin{align*}
        \vv_i^\top \mM_f \vv_i \leq \lambda_f^{\tt max} \leq \frac{1}{\alpha_f^-}\lambda_r^{\tt avg} \leq \pa{1+\epsilon} \lambda_r^{\tt avg}.
    \end{align*}
    Similarly, we can show $ \vv_i^\top \mM_f \vv_i \geq \pa{1-\epsilon} \lambda_f^{\tt avg}$, and $\pa{1-\epsilon} \lambda_r^{\tt avg} \leq \vv_i^\top \mM_r \vv_i \leq \pa{1+\epsilon} \lambda_r^{\tt avg}$.
\end{proof}

\begingroup
\renewcommand{\thesection}{\ref{sec:ssu_theory}}
\renewcommand{\theHproposition}{appendix.small-singular}
\setcounter{proposition}{2}
\begin{proposition}
{\bf (a)} If the direction carries a nontrivial forget/retain signal, i.e., $\norm{\mG_f \vv_i}+\norm{\mG_r \vv_i} \geq \xi$ for $\xi > 0$ and its singular value is small relative to this signal, i.e., $\sigma_i \leq \rho \xi$ for $\rho \in \pa{0, \frac{1}{\sqrt{2}}}$, then its forget-retain alignment satisfies $a\pa{\vv_i} \leq 2\rho^2-1 < 0$. \\
{\bf (b)} Under Assumption \ref{asm:disparity}, for two directions $\vv_i,\vv_j$, $i,j \in [r_e]$, with alignment scores $a\pa{\vv_i} \leq -\delta_i$ and $a\pa{\vv_j} \geq \delta_j$ where $\delta_i, \delta_j \in (0, 1]$, if the separation $\pa{\delta_i + \delta_j} > \frac{\epsilon}{1-\epsilon} \cdot \frac{s^2+1}{s}$ for $\epsilon = \max\set{\frac{1}{\alpha_f^-}-1, 1-\frac{1}{\alpha_f^+}, \frac{1}{\alpha_r^-}-1, 1-\frac{1}{\alpha_r^+}}$ where $s = \sqrt{\frac{\norm{\mG_r\mP}_F}{\norm{\mG_f\mP}_F}}$, then $\sigma_i < \sigma_j$.
\end{proposition}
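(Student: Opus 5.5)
Both parts rest on the identity already used in the proof of Lemma~\ref{lem:vec_inner}: since $\mG_u\vv_i = \sigma_i\vu_i$ and $\mG_u = \mG_f+\mG_r$, we have $\mG_f\vv_i+\mG_r\vv_i = \sigma_i\vu_i$. Taking squared norms gives
\begin{align*}
\sigma_i^2 = \norm{\mG_f\vv_i}^2+\norm{\mG_r\vv_i}^2 + 2\,a\pa{\vv_i}\norm{\mG_f\vv_i}\norm{\mG_r\vv_i}.
\end{align*}
Throughout I write $x=\norm{\mG_f\vv_i}$ and $y=\norm{\mG_r\vv_i}$. I also assume both are nonzero, so that $a\pa{\vv_i}$ is defined.

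\textbf{Part (a).} Rewrite the identity as $\sigma_i^2 = (x+y)^2 - 2(1-a)xy$. The AM--GM inequality gives $xy\le (x+y)^2/4$, and $1-a\ge 0$, so
\begin{align*}
\sigma_i^2 \ge (x+y)^2\pa{1-\tfrac{1-a}{2}} = (x+y)^2\,\tfrac{1+a}{2}.
\end{align*}
Combining this with $\sigma_i\le\rho\xi\le\rho(x+y)$ gives $\rho^2\ge (1+a)/2$, that is, $a\pa{\vv_i}\le 2\rho^2-1$. This bound is negative because $\rho<1/\sqrt2$.

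\textbf{Part (b).} The plan is to bound $\sigma_i^2$ from above and $\sigma_j^2$ from below using the same identity together with Lemma~\ref{lem:concentration}. Since $\vv_i\in\mathcal E$, we have $\mP\vv_i=\vv_i$, so $x^2=\vv_i^\top\mM_f\vv_i$ and $y^2=\vv_i^\top\mM_r\vv_i$. Lemma~\ref{lem:concentration} then places $x^2$ in $[(1-\epsilon)\lambda_f^{\tt avg},(1+\epsilon)\lambda_f^{\tt avg}]$, and likewise for $y^2$ with $\lambda_r^{\tt avg}$.

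I set $F=\lambda_f^{\tt avg}$ and $R=\lambda_r^{\tt avg}$, and note that $r_eF=\norm{\mG_f\mP}_F^2$ and $r_eR=\norm{\mG_r\mP}_F^2$, so $\sqrt{R/F}=s^2$ in the notation of the statement. The upper bound for direction $i$, using $a\pa{\vv_i}\le-\delta_i$, is
\begin{align*}
\sigma_i^2\le(1+\epsilon)(F+R)-2\delta_i(1-\epsilon)\sqrt{FR}.
\end{align*}
The lower bound for direction $j$, using $a\pa{\vv_j}\ge\delta_j$, is
\begin{align*}
\sigma_j^2\ge(1-\epsilon)(F+R)+2\delta_j(1-\epsilon)\sqrt{FR}.
\end{align*}
Then $\sigma_i<\sigma_j$ holds as soon as $2(1-\epsilon)(\delta_i+\delta_j)\sqrt{FR}>2\epsilon(F+R)$. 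Dividing through, this is
\begin{align*}
\delta_i+\delta_j>\frac{\epsilon}{1-\epsilon}\cdot\frac{F+R}{\sqrt{FR}}.
\end{align*}

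\textbf{Main obstacle: matching the stated constant.} The bound above gives the ratio $\frac{F+R}{\sqrt{FR}}=t+1/t$ with $t=\sqrt{R/F}=s^2$. The statement instead has $\frac{s^2+1}{s}=s+1/s$, which is what this computation yields if $s$ is read as $\sqrt{R/F}=\norm{\mG_r\mP}_F/\norm{\mG_f\mP}_F$, without the extra square root. I would carry out the argument with the natural ratio $t$ and then reconcile it with the statement's definition of $s$. If the definitions genuinely differ, I would either note that $s$ should be read as the Frobenius-norm ratio itself, or check whether the inequality between $s+1/s$ and $t+1/t$ goes in the direction that still makes the stated condition sufficient. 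It does not go that way in general: with the statement's $s=\sqrt t$, we have $s+1/s\le t+1/t$ whenever $t\ge1$, so the stated threshold would be smaller than the one the argument needs. So I would most likely present the result with $s$ defined as $\norm{\mG_r\mP}_F/\norm{\mG_f\mP}_F$.
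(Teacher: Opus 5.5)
Your proof is correct and follows essentially the same route as the paper: in part (a) you expand $\sigma_i^2$ and use AM--GM, which is the paper's $\norm{\mG_f\vv_i}/\norm{\mG_r\vv_i}+\norm{\mG_r\vv_i}/\norm{\mG_f\vv_i}\ge 2$ step in a different form, and in part (b) you bound $\sigma_i^2$ above and $\sigma_j^2$ below via Lemma~\ref{lem:concentration}, as the paper does. Your reading of the constant is also right: since $\lambda_f^{\tt avg}=\norm{\mG_f\mP}_F^2/r_e$ (and likewise for $r$), the paper's proof wrongly identifies $\sqrt{\norm{\mG_r\mP}_F/\norm{\mG_f\mP}_F}$ with $\sqrt{\lambda_r^{\tt avg}/\lambda_f^{\tt avg}}$, so the argument actually proves the claim with $s=\norm{\mG_r\mP}_F/\norm{\mG_f\mP}_F$; moreover $\sqrt t+1/\sqrt t\le t+1/t$ holds for every $t>0$ (not only $t\ge 1$), so the literal threshold is never sufficient by this argument except when $t=1$.
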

\endgroup
\begin{proof}
(a)  Given that $\mG_u = \sum_{i=1}^{r_e} \sigma_i \vu_i \vv_i^\top$, we have for the singular value
    \begin{align*}
        \sigma_i^2 &= \norm{\mG_u \vv_i}_2^2 \\
        &= \norm{\pa{\mG_f+\mG_r}\vv_i}_2^2 \\
        &= \norm{\mG_f\vv_i}_2^2 + \norm{\mG_r\vv_i}_2^2 + 2 \inner{\mG_f\vv_i}{\mG_r\vv_i} \\
        &= \norm{\mG_f\vv_i}_2^2 + \norm{\mG_r\vv_i}_2^2 + 2 \norm{\mG_f\vv_i}_2\norm{\mG_r\vv_i}_2 a\pa{\vv_i}
    \end{align*}
    where the last equality follows from Definition \ref{def:alignment}. Also, we know from the conditions that
    \begin{align*}
        \sigma_i \leq \rho \xi \leq \rho \pa{\norm{\mG_f \vv_i}+\norm{\mG_r \vv_i}}.
    \end{align*}
    As a result, we have $$\norm{\mG_f\vv_i}_2^2 + \norm{\mG_r\vv_i}_2^2 + 2 \norm{\mG_f\vv_i}_2\norm{\mG_r\vv_i}_2 a\pa{\vv_i} = \sigma_i^2 \leq  \rho^2 \pa{\norm{\mG_f \vv_i}+\norm{\mG_r \vv_i}}^2.$$
    Rearranging the terms yields
    $$2 \norm{\mG_f\vv_i}_2\norm{\mG_r\vv_i}_2 a\pa{\vv_i} \leq  \pa{\rho^2-1} \pa{\norm{\mG_f \vv_i}^2+\norm{\mG_r \vv_i}^2} + 2 \rho^2\norm{\mG_f\vv_i}_2\norm{\mG_r\vv_i}_2.$$
    Dividing $2 \norm{\mG_f\vv_i}_2\norm{\mG_r\vv_i}_2$ on both sides,
    \begin{align*}
        a\pa{\vv_i} \leq \rho^2 + \frac{\rho^2-1}{2} \pa{\frac{\norm{\mG_f\vv_i}_2}{\norm{\mG_r \vv_i}} + \frac{\norm{\mG_r \vv_i}}{\norm{\mG_f\vv_i}_2}} \leq 2\rho^2 - 1,
    \end{align*}
    where the last inequality follows from the facts that $\pa{\frac{\norm{\mG_f\vv_i}_2}{\norm{\mG_r \vv_i}} + \frac{\norm{\mG_r \vv_i}}{\norm{\mG_f\vv_i}_2}} \geq 2$ and $\rho^2-1 \leq 0$. This completes the proof.

(b) From the proof of (1), we know
    \begin{align*}
        \sigma_i^2 = \norm{\mG_f\vv_i}_2^2 + \norm{\mG_r\vv_i}_2^2 + 2 \norm{\mG_f\vv_i}_2\norm{\mG_r\vv_i}_2 a\pa{\vv_i}.
    \end{align*}
    Furthermore,
    \begin{align*}
        \norm{\mG_f\vv_i}_2^2 &= \vv_i^\top \mG_f^\top \mG_f \vv_i \\
        &= \vv_i^\top \mP \mG_f^\top \mG_f \mP \vv_i \\
        &= \vv_i^\top \mM_f \vv_i
    \end{align*}
    where we used the fact that $\mP \vv_i = \vv_i$ as $\mP = \mV_e\mV_e^\top$ and $\vv_i$ is one column of $\mV_e$. Therefore, by Lemma \ref{lem:concentration} we know for $\norm{\mG_f\vv_i}_2^2$ and similarly for $\norm{\mG_r\vv_i}_2^2$
    \begin{align*}
        \pa{1-\epsilon} \lambda_f^{\tt avg} \leq \norm{\mG_f\vv_i}_2^2 \leq \pa{1+\epsilon} \lambda_f^{\tt avg}, && \pa{1-\epsilon} \lambda_r^{\tt avg} \leq \norm{\mG_r\vv_i}_2^2 \leq \pa{1+\epsilon} \lambda_r^{\tt avg}.
    \end{align*}
    As a result, we can have for direction $\vv_i$ with singular value $\sigma_i$ and conflict $a\pa{\vv_i} \leq - \delta_i < 0$,
    \begin{align*}
        \sigma_i^2 &= \norm{\mG_f\vv_i}_2^2 + \norm{\mG_r\vv_i}_2^2 + 2 \norm{\mG_f\vv_i}_2\norm{\mG_r\vv_i}_2 a\pa{\vv_i} \\
        &\leq \pa{1+\epsilon} \lambda_f^{\tt avg} + \pa{1+\epsilon} \lambda_r^{\tt avg} + \sqrt{\pa{1-\epsilon} \lambda_f^{\tt avg}\pa{1-\epsilon} \lambda_r^{\tt avg}} a\pa{\vv_i} \\
        &\leq \pa{1+\epsilon} \pa{\lambda_f^{\tt avg} + \lambda_r^{\tt avg}} - 2\pa{1-\epsilon}\delta_i \sqrt{\lambda_f^{\tt avg}\lambda_r^{\tt avg}}.
    \end{align*}
    And for direction $\vv_j$ with singular value $\sigma_j$ and conflict $a\pa{\vv_j} \geq \delta_j > 0$,
    \begin{align*}
        \sigma_j^2 &= \norm{\mG_f\vv_j}_2^2 + \norm{\mG_r\vv_j}_2^2 + 2 \norm{\mG_f\vv_j}_2\norm{\mG_r\vv_j}_2 a\pa{\vv_j} \\
        &\geq \pa{1-\epsilon} \lambda_f^{\tt avg} + \pa{1-\epsilon} \lambda_r^{\tt avg} + \sqrt{\pa{1-\epsilon} \lambda_f^{\tt avg}\pa{1-\epsilon} \lambda_r^{\tt avg}} a\pa{\vv_j} \\
        &\geq \pa{1-\epsilon} \pa{\lambda_f^{\tt avg} + \lambda_r^{\tt avg}+2\delta_j \sqrt{\lambda_f^{\tt avg}\lambda_r^{\tt avg}}}.
    \end{align*}
    Given the condition of a sufficient gap in conflict $\pa{\delta_i + \delta_j} > \frac{\epsilon}{1-\epsilon} \cdot \frac{s^2+1}{s}$ for $s = \sqrt{\frac{\norm{\mG_r\mP}_F}{\norm{\mG_f\mP}_F}} = \sqrt{\frac{\lambda_r^{\tt avg}}{\lambda_f^{\tt avg}}}$, multiplying both sides by $2\pa{1-\epsilon}\sqrt{\lambda_f^{\tt avg}\lambda_r^{\tt avg}}$ yields
    \begin{align*}
        2\pa{1-\epsilon}\pa{\delta_i + \delta_j}\sqrt{\lambda_f^{\tt avg}\lambda_r^{\tt avg}} > 2\epsilon \pa{\lambda_f^{\tt avg} + \lambda_r^{\tt avg}},
    \end{align*}
    which, by rearranging the terms, is equivalent to 
    \begin{align*}
        \pa{1+\epsilon} \pa{\lambda_f^{\tt avg} + \lambda_r^{\tt avg}} - 2\pa{1-\epsilon}\delta_i \sqrt{\lambda_f^{\tt avg}\lambda_r^{\tt avg}} < \pa{1-\epsilon} \pa{\lambda_f^{\tt avg} + \lambda_r^{\tt avg}+2\delta_j \sqrt{\lambda_f^{\tt avg}\lambda_r^{\tt avg}}}.
    \end{align*}
    Therefore, we have 
    \begin{align*}
        \sigma_i &\leq \pa{1+\epsilon} \pa{\lambda_f^{\tt avg} + \lambda_r^{\tt avg}} - 2\pa{1-\epsilon}\delta_i \sqrt{\lambda_f^{\tt avg}\lambda_r^{\tt avg}} \\
        &< \pa{1-\epsilon} \pa{\lambda_f^{\tt avg} + \lambda_r^{\tt avg}+2\delta_j \sqrt{\lambda_f^{\tt avg}\lambda_r^{\tt avg}}} \\
        &\leq \sigma_j.
    \end{align*}
\end{proof}

\subsection{Theoretical Justification for Salun} \label{sec:appendix_salun}

\begin{assumption} [Smoothness] \label{asm:smooth_vec} For constants $\beta_f, \beta_r > 0$, the forgetting loss $\cL_f$ and the retaining loss $\cL_r$ are smooth, i.e., for $\vw$, $\vw'$,
\begin{align*}
    \abs{\cL_f\pa{\vw'} - \cL_f\pa{\vw} - \inner{\vg_f\pa{\vw}}{\vw' - \vw}} \leq \frac{\beta_f}{2}\norm{\vw' - \vw}^2, \\
    \abs{\cL_r\pa{\vw'} - \cL_r\pa{\vw} - \inner{\vg_r\pa{\vw}}{\vw' - \vw}} \leq \frac{\beta_r}{2}\norm{\vw' - \vw}^2.
\end{align*}
\end{assumption}

\propsalun*
\begin{proof}
    We first show that $g_{f,i} g_{r,i} < 0$. Assume that $g_{f,i} g_{r,i} \geq 0$, then we have
    $\abs{g_{u,i}} = \abs{g_{f,i}} + \abs{g_{r,i}} \geq \xi$, which contradicts the condition $\abs{g_{u,i}} \leq \rho \xi$ for $\rho < 1$. Therefore, $g_{f,i} g_{r,i} < 0$.

    Next, by Assumption \ref{asm:smooth_vec}, 
    \begin{align*}
         -\eta g_{f,i} g_{u,i}-\frac{\beta_f}{2}\eta^2 g_{u,i}^2, \leq \cL_f\pa{\vw'}-\cL_f\pa{\vw} \leq -\eta g_{f,i} g_{u,i}+\frac{\beta_f}{2}\eta^2 g_{u,i}^2,
    \end{align*}
    If $g_{f,i} g_{u,i} > 0$, then for $\Delta \cL_f = \cL_f\pa{\vw'}-\cL_f\pa{\vw}$,
    \begin{align*}
        \Delta \cL_f \leq -\eta g_{f,i} g_{u,i}+\frac{\beta_f}{2}\eta^2 g_{u,i}^2 = \eta \pa{\eta\frac{\beta_f}{2} g_{u,i}^2 - g_{f,i}g_{u,i}} < 0
    \end{align*}
    where the last line follows from $\eta < \frac{2\abs{g_{f,i}}}{\beta_f\abs{g_{u, i}}}$. Also, if $g_{f,i} g_{u,i} < 0$, then
    \begin{align*}
        \Delta \cL_f \geq -\eta g_{f,i} g_{u,i}-\frac{\beta_f}{2}\eta^2 g_{u,i}^2 = \eta \pa{\abs{g_{f,i}}\abs{g_{u,i}} - \eta\frac{\beta_f}{2} \abs{g_{u,i}}^2 } > 0
    \end{align*}
    where the last line again follows from $\eta < \frac{2\abs{g_{f,i}}}{\beta_f\abs{g_{u, i}}}$. As a result, we conclude that $\sign\pa{\Delta \cL_f} = -\sign \pa{g_{f,i} g_{u,i}}$. 
    
    Similarly, we can show for $\Delta \cL_r = \cL_r\pa{\vw'} - \cL_r \pa{\vw}$ that $\sign\pa{\Delta \cL_r} = -\sign \pa{g_{r,i} g_{u,i}}$. Therefore,
    $$\sign\pa{\Delta \cL_f \Delta \cL_r} = \sign \pa{g_{f,i} g_{u,i}} \sign \pa{g_{f,i} g_{u,i}} = \sign\pa{g_{f,i} g_{r,i} g_{u,i}^2} = \sign\pa{g_{f,i} g_{r,i}},$$
    which completes the proof.
\end{proof}

\section{Detailed Experiment Setup}\label{sec:supp_exp}

\subsection{Details of unlearning on image classification.}
\label{sec:appendix_image_cls}

We evaluate random sample unlearning where we randomly select 10\% of the training data to forget. This corresponds to 5,000 samples that should be unlearned. The same random seed of 42 is used for selecting forget samples to ensure reproducibility. Each experiment is evaluated on three metrics: retain set accuracy with the remaining 45,000 training samples, forget set accuracy with the 5,000 samples to unlearn, and test set accuracy with all 10,000 test samples.

During unlearning, we apply SVD on-the-fly. For computational efficiency, we apply SVD only to convolutional layers, while using pre-computed saliency coordinate masks for fully-connected layers. The unlearning process fine-tunes the model on the 45,000-sample retain set for 10 epochs with the learning rate of 0.013 while applying spectral filtering according to the keep ratio $\gamma = 0.30$, which controls the proportion of singular values retained after reconstruction and was chosen by grid search, selecting the value that yields the best validation performance.

\subsection{Details of unlearning on image generation.}
\label{sec:appendix_image_gen}

\myparagraph{DDPM on CIFAR-10.}
We evaluate class-conditional unlearning on CIFAR-10 where we aim to remove the model's ability to generate images from a specific class. The diffusion model uses a U-Net architecture with 128 base channels, channel multipliers of $[1, 2, 2, 2]$, 2 residual blocks per resolution, and self-attention at resolution 16. The diffusion process uses a linear noise schedule with 1000 timesteps, $\beta_{\text{start}} = 0.0001$ and $\beta_{\text{end}} = 0.02$.

During unlearning, we fine-tune the diffusion model for 1000 iterations with batch size 128 using gradient ascent-based unlearning loss. For computational efficiency, we apply on-the-fly SVD reconstruction only to convolutional layers, while using pre-computed saliency coordinate masks for non-convolutional layers. For each convolutional layer, we compute its SVD $\mathbf{W} = \mathbf{U}\boldsymbol{\Sigma}\mathbf{V}^\top$ during training and apply eigenvalue-based filtering according to the keep ratio $\gamma = 0.50$, which controls the proportion of eigenvalues retained. The unlearning uses Adam optimizer with learning rate $1 \times 10^{-5}$, remain alpha $\alpha_{\text{remain}} = 1 \times 10^{-3}$, and forget alpha $\alpha_{\text{forget}} = 1.0$. We obtained the hyperparameters by grid search, selecting the value that yields the best validation performance. For evaluation, we generate 5,000 images for each class to compute FID and FA.

\myparagraph{Stable Diffusion on Imagenette.} Stable Diffusion uses a latent diffusion model with a U-Net denoising network operating in the latent space of a pre-trained autoencoder. The U-Net has 320 base channels, channel multipliers of $[1, 2, 4, 4]$, 2 residual blocks per resolution, 8 attention heads, and transformer depth of 1 with context dimension 768 for CLIP text conditioning. The diffusion process uses a linear noise schedule with 1000 timesteps, $\beta_{\text{start}} = 0.00085$ and $\beta_{\text{end}} = 0.012$.

During unlearning, we fine-tune the model for 5 epochs with batch size 8 using gradient ascent-based unlearning loss with randomly assigned labels for the forget class. For computational efficiency, we apply on-the-fly SVD reconstruction only to cross attention layers, while using pre-computed saliency coordinate masks for the other layers. For each convolutional layer, we compute its SVD $\mathbf{W} = \mathbf{U}\boldsymbol{\Sigma}\mathbf{V}^\top$ during training and apply eigenvalue-based filtering according to the keep ratio $\gamma = 0.50$, which controls the proportion of eigenvalues retained and was chosen by grid search, selecting the value that yields the best validation performance. The unlearning uses Adam optimizer with learning rate $1 \times 10^{-5}$ and classifier-free guidance scale of 7.5. For evaluation, we generate 300 images for each class to compute FID and FA.

\subsection{Details of unlearning on large language models.}
\label{sec:appendix_llm}

During unlearning, we fine-tune the model for 5 epochs with batch size 8 and gradient accumulation steps of 4, resulting in an effective batch size of 32. We experiment with two unlearning objectives: IDK+AP (I-Don't-Know with answer preservation) and ME+GD (mismatch entropy with gradient difference). For computational efficiency, we apply on-the-fly SVD reconstruction only to attention projection layers (Q, K, V), while using pre-computed saliency coordinate masks for feed-forward and other layers. For attention layers, we compute SVD during training and apply eigenvalue-based filtering according to the QKV mask ratio $\alpha_{\text{qkv}} = 0.20$, which controls the proportion of eigenvalues retained. For non-attention layers, we use coordinate masks with mask ratio of 0.10.

For IDK+AP, we use a learning rate $3 \times 10^{-5}$ with forget coefficient 1.0 and regularization coefficient 1.0. For ME+GD, we use a learning rate $2 \times 10^{-5}$ with forget coefficient 0.1 and regularization coefficient 1.0. The top eigenvalue ratio for both methods is 0.2, which was chosen by grid search, selecting the value that yields the best validation performance. All experiments use the AdamW optimizer with weight decay 0.01 and are trained with distributed data parallel across 2 GPUs.

\section{Additional generation results} \label{sec:supp_results}

\begin{figure*}[t]
\centering
\renewcommand{\arraystretch}{1.1}
\setlength{\tabcolsep}{2pt}
\caption{Examples of generated images using \texttt{SSU}. From the rows below, diagonal images represent the forgetting class, while non-diagonal images represent the remaining class.}
\label{fig:imagenette_grid_1}
\resizebox{\textwidth}{!}{\begin{tabular}{c|cccccccccc}
\toprule
\textbf{Unlearned} & \multicolumn{10}{c}{\textbf{Prompt class}} \\
\textbf{class} & Tench & springer & Cassette & Saw & Church & French horn & truck & Gas pump & Golf ball & Parachute \\
\midrule
Tench            & \includegraphics[width=0.07\textwidth]{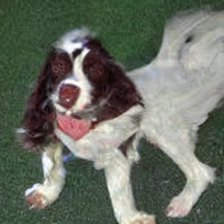} & \includegraphics[width=0.07\textwidth]{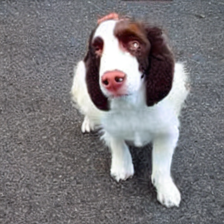} & \includegraphics[width=0.07\textwidth]{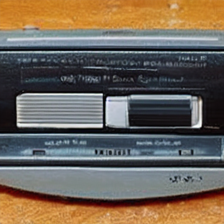} & \includegraphics[width=0.07\textwidth]{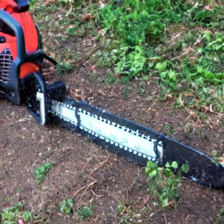} & \includegraphics[width=0.07\textwidth]{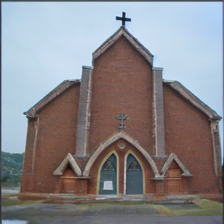} & \includegraphics[width=0.07\textwidth]{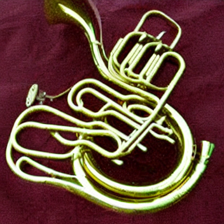} & \includegraphics[width=0.07\textwidth]{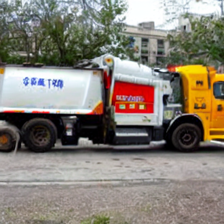} & \includegraphics[width=0.07\textwidth]{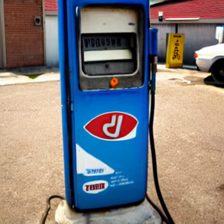} & \includegraphics[width=0.07\textwidth]{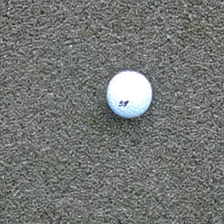} & \includegraphics[width=0.07\textwidth]{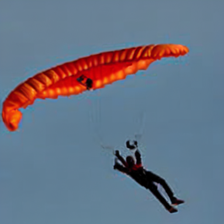} \\
English springer & \includegraphics[width=0.07\textwidth]{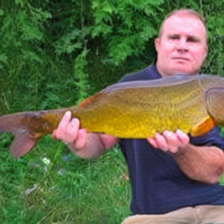} & \includegraphics[width=0.07\textwidth]{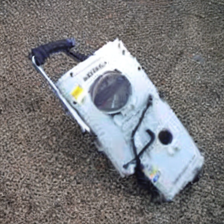} & \includegraphics[width=0.07\textwidth]{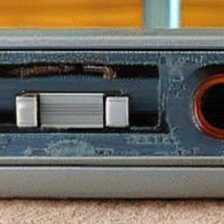} & \includegraphics[width=0.07\textwidth]{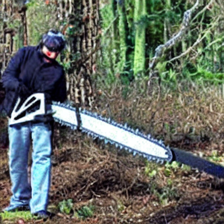} & \includegraphics[width=0.07\textwidth]{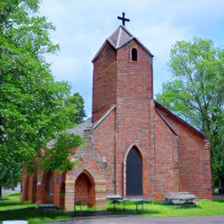} & \includegraphics[width=0.07\textwidth]{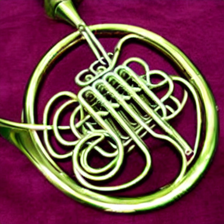} & \includegraphics[width=0.07\textwidth]{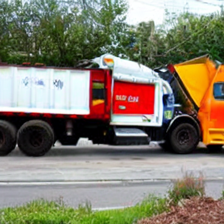} & \includegraphics[width=0.07\textwidth]{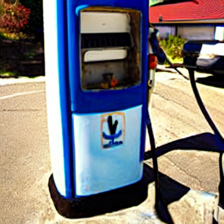} & \includegraphics[width=0.07\textwidth]{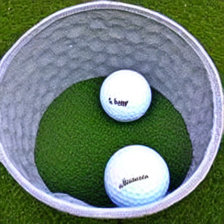} & \includegraphics[width=0.07\textwidth]{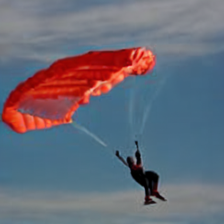} \\
Cassette player  & \includegraphics[width=0.07\textwidth]{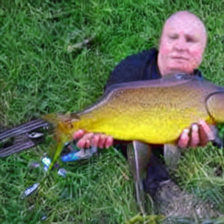} & \includegraphics[width=0.07\textwidth]{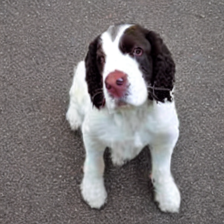} & \includegraphics[width=0.07\textwidth]{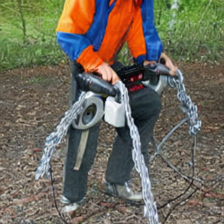} & \includegraphics[width=0.07\textwidth]{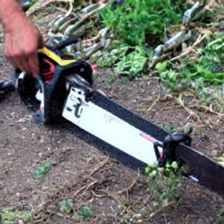} & \includegraphics[width=0.07\textwidth]{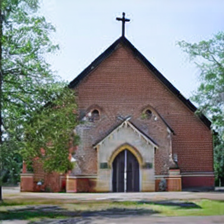} & \includegraphics[width=0.07\textwidth]{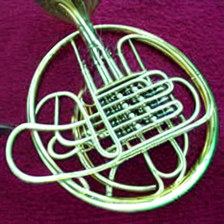} & \includegraphics[width=0.07\textwidth]{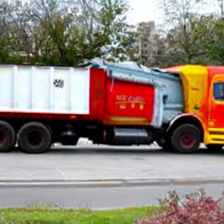} & \includegraphics[width=0.07\textwidth]{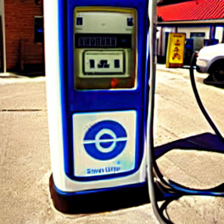} & \includegraphics[width=0.07\textwidth]{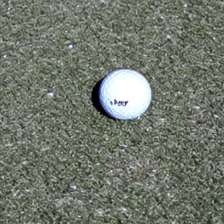} & \includegraphics[width=0.07\textwidth]{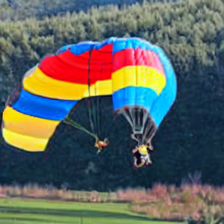} \\
Chain saw        & \includegraphics[width=0.07\textwidth]{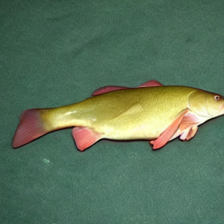} & \includegraphics[width=0.07\textwidth]{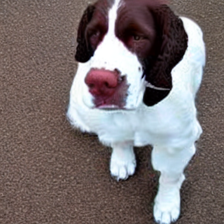} & \includegraphics[width=0.07\textwidth]{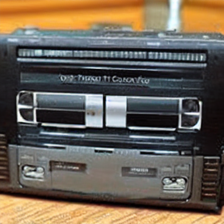} & \includegraphics[width=0.07\textwidth]{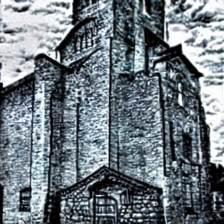} & \includegraphics[width=0.07\textwidth]{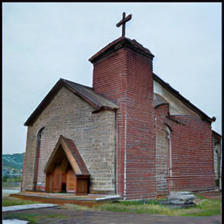} & \includegraphics[width=0.07\textwidth]{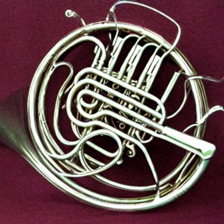} & \includegraphics[width=0.07\textwidth]{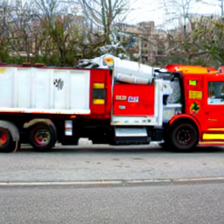} & \includegraphics[width=0.07\textwidth]{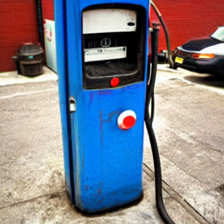} & \includegraphics[width=0.07\textwidth]{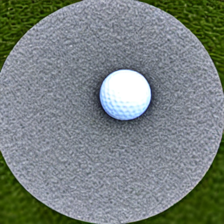} & \includegraphics[width=0.07\textwidth]{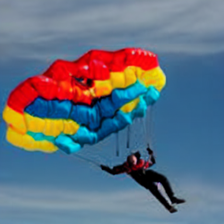} \\
Church           & \includegraphics[width=0.07\textwidth]{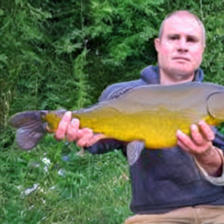} & \includegraphics[width=0.07\textwidth]{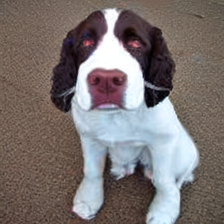} & \includegraphics[width=0.07\textwidth]{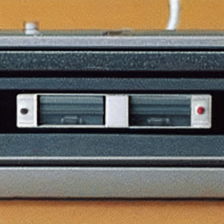} & \includegraphics[width=0.07\textwidth]{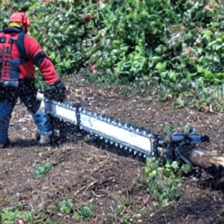} & \includegraphics[width=0.07\textwidth]{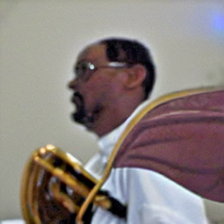} & \includegraphics[width=0.07\textwidth]{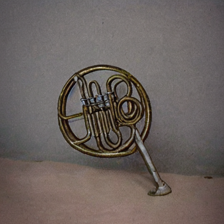} & \includegraphics[width=0.07\textwidth]{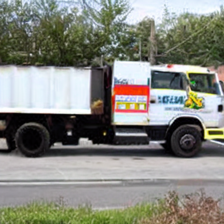} & \includegraphics[width=0.07\textwidth]{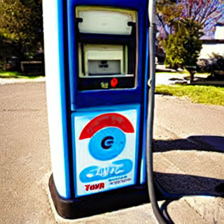} & \includegraphics[width=0.07\textwidth]{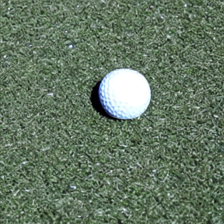} & \includegraphics[width=0.07\textwidth]{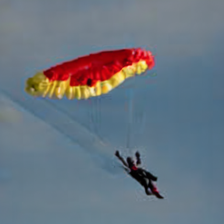} \\
French horn      & \includegraphics[width=0.07\textwidth]{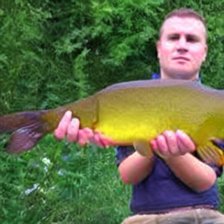} & \includegraphics[width=0.07\textwidth]{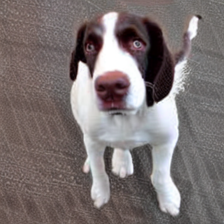} & \includegraphics[width=0.07\textwidth]{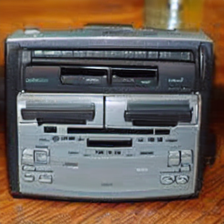} & \includegraphics[width=0.07\textwidth]{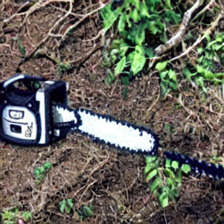} & \includegraphics[width=0.07\textwidth]{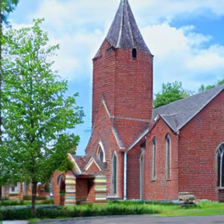} & \includegraphics[width=0.07\textwidth]{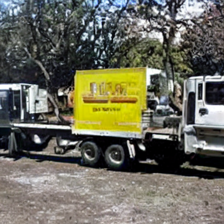} & \includegraphics[width=0.07\textwidth]{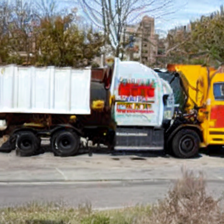} & \includegraphics[width=0.07\textwidth]{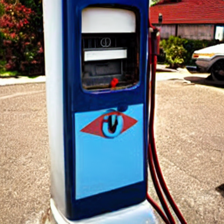} & \includegraphics[width=0.07\textwidth]{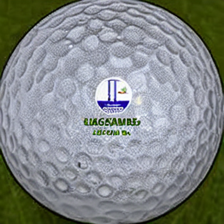} & \includegraphics[width=0.07\textwidth]{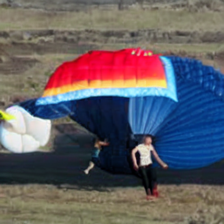} \\
Garbage truck    & \includegraphics[width=0.07\textwidth]{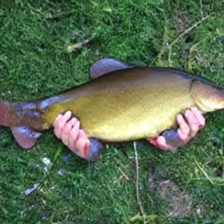} & \includegraphics[width=0.07\textwidth]{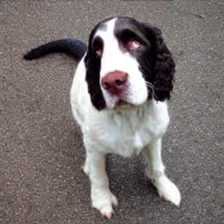} & \includegraphics[width=0.07\textwidth]{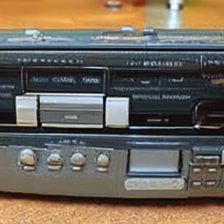} & \includegraphics[width=0.07\textwidth]{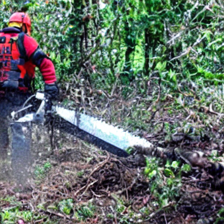} & \includegraphics[width=0.07\textwidth]{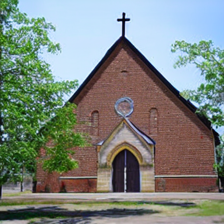} & \includegraphics[width=0.07\textwidth]{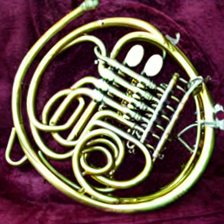} & \includegraphics[width=0.07\textwidth]{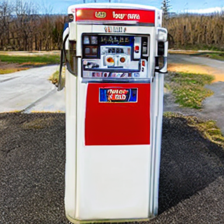} & \includegraphics[width=0.07\textwidth]{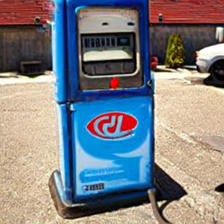} & \includegraphics[width=0.07\textwidth]{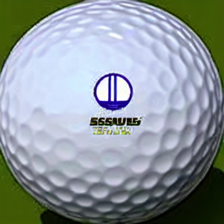} & \includegraphics[width=0.07\textwidth]{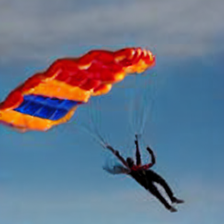} \\
Gas pump         & \includegraphics[width=0.07\textwidth]{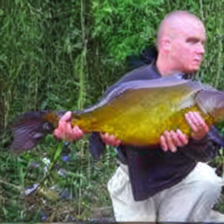} & \includegraphics[width=0.07\textwidth]{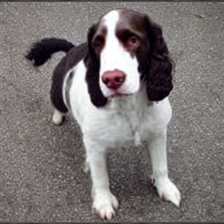} & \includegraphics[width=0.07\textwidth]{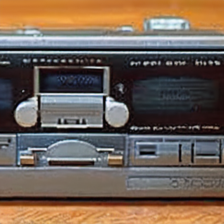} & \includegraphics[width=0.07\textwidth]{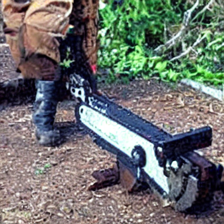} & \includegraphics[width=0.07\textwidth]{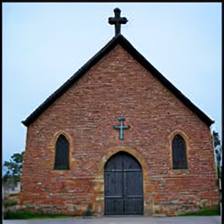} & \includegraphics[width=0.07\textwidth]{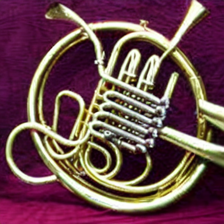} & \includegraphics[width=0.07\textwidth]{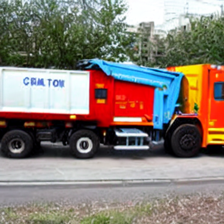} & \includegraphics[width=0.07\textwidth]{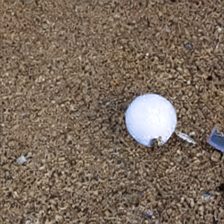} & \includegraphics[width=0.07\textwidth]{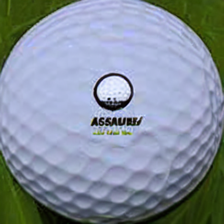} & \includegraphics[width=0.07\textwidth]{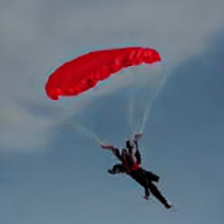} \\
Golf ball        & \includegraphics[width=0.07\textwidth]{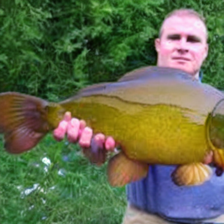} & \includegraphics[width=0.07\textwidth]{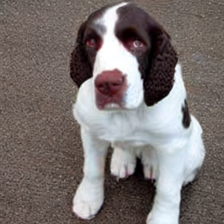} & \includegraphics[width=0.07\textwidth]{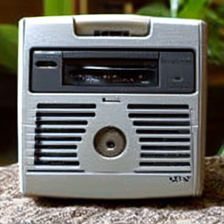} & \includegraphics[width=0.07\textwidth]{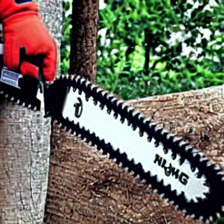} & \includegraphics[width=0.07\textwidth]{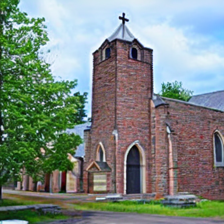} & \includegraphics[width=0.07\textwidth]{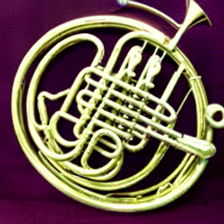} & \includegraphics[width=0.07\textwidth]{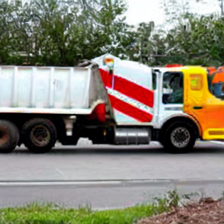} & \includegraphics[width=0.07\textwidth]{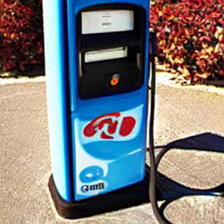} & \includegraphics[width=0.07\textwidth]{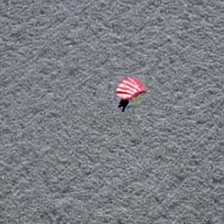} & \includegraphics[width=0.07\textwidth]{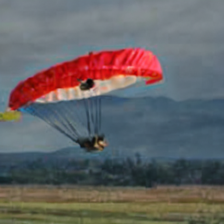} \\
Parachute        & \includegraphics[width=0.07\textwidth]{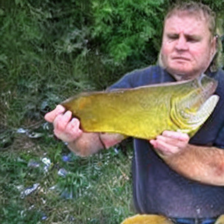} & \includegraphics[width=0.07\textwidth]{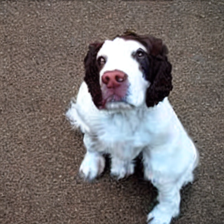} & \includegraphics[width=0.07\textwidth]{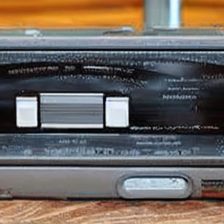} & \includegraphics[width=0.07\textwidth]{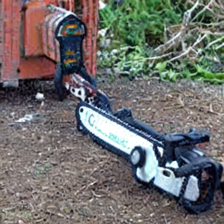} & \includegraphics[width=0.07\textwidth]{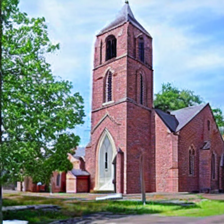} & \includegraphics[width=0.07\textwidth]{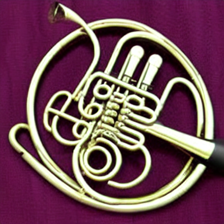} & \includegraphics[width=0.07\textwidth]{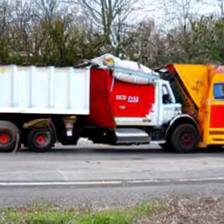} & \includegraphics[width=0.07\textwidth]{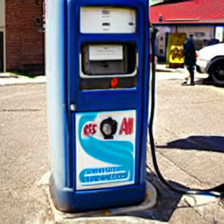} & \includegraphics[width=0.07\textwidth]{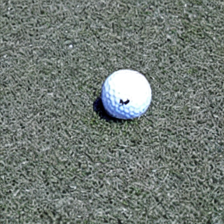} & \includegraphics[width=0.07\textwidth]{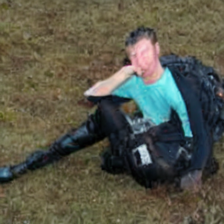} \\
\bottomrule
\end{tabular}
}
\end{figure*}

\begin{figure*}[t]
\centering
\renewcommand{\arraystretch}{1.1}
\setlength{\tabcolsep}{2pt}
\caption{Examples of generated images using \texttt{SSU}. From the rows below, diagonal images represent the forgetting class, while non-diagonal images represent the remaining class.}
\label{fig:imagenette_grid_2}
\resizebox{\textwidth}{!}{\begin{tabular}{c|cccccccccc}
\toprule
\textbf{Unlearned} & \multicolumn{10}{c}{\textbf{Prompt class}} \\
\textbf{class} & Tench & springer & Cassette & Saw & Church & French horn & truck & Gas pump & Golf ball & Parachute \\
\midrule
Tench            & \includegraphics[width=0.07\textwidth]{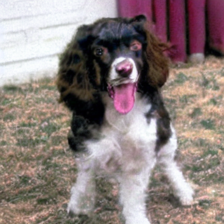} & \includegraphics[width=0.07\textwidth]{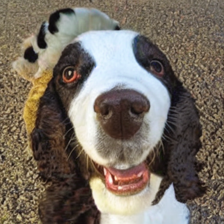} & \includegraphics[width=0.07\textwidth]{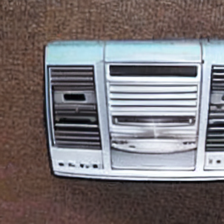} & \includegraphics[width=0.07\textwidth]{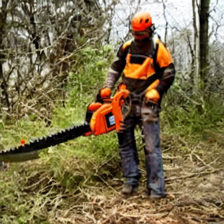} & \includegraphics[width=0.07\textwidth]{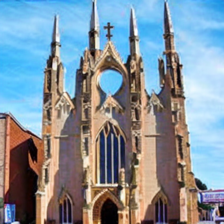} & \includegraphics[width=0.07\textwidth]{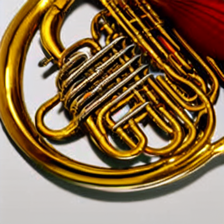} & \includegraphics[width=0.07\textwidth]{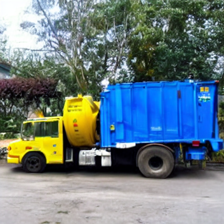} & \includegraphics[width=0.07\textwidth]{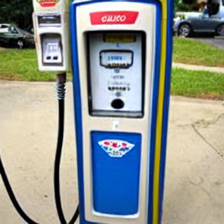} & \includegraphics[width=0.07\textwidth]{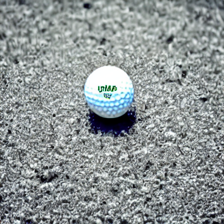} & \includegraphics[width=0.07\textwidth]{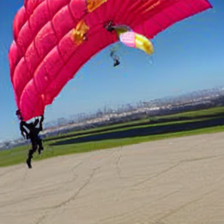} \\
English springer & \includegraphics[width=0.07\textwidth]{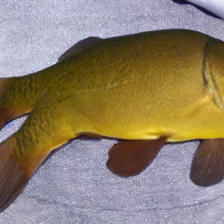} & \includegraphics[width=0.07\textwidth]{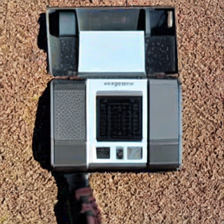} & \includegraphics[width=0.07\textwidth]{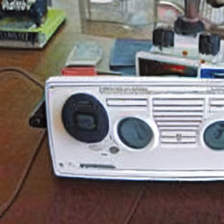} & \includegraphics[width=0.07\textwidth]{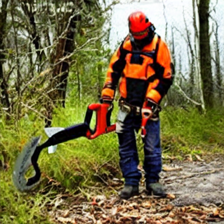} & \includegraphics[width=0.07\textwidth]{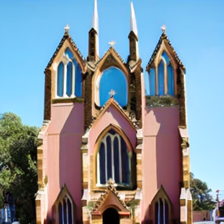} & \includegraphics[width=0.07\textwidth]{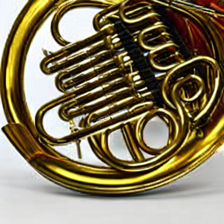} & \includegraphics[width=0.07\textwidth]{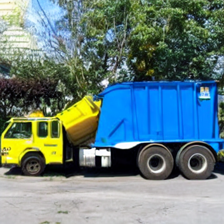} & \includegraphics[width=0.07\textwidth]{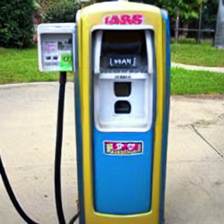} & \includegraphics[width=0.07\textwidth]{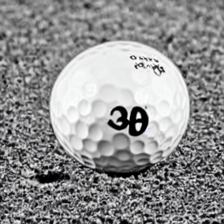} & \includegraphics[width=0.07\textwidth]{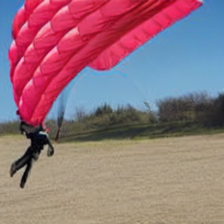} \\
Cassette player  & \includegraphics[width=0.07\textwidth]{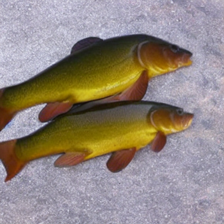} & \includegraphics[width=0.07\textwidth]{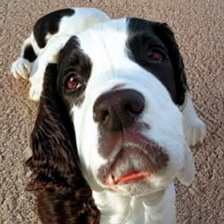} & \includegraphics[width=0.07\textwidth]{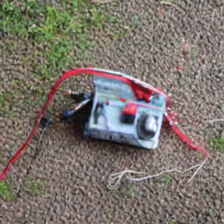} & \includegraphics[width=0.07\textwidth]{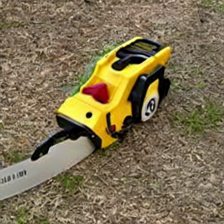} & \includegraphics[width=0.07\textwidth]{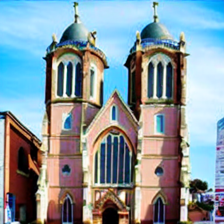} & \includegraphics[width=0.07\textwidth]{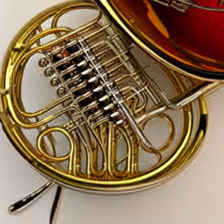} & \includegraphics[width=0.07\textwidth]{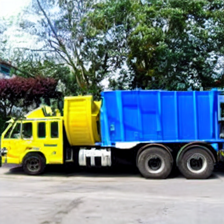} & \includegraphics[width=0.07\textwidth]{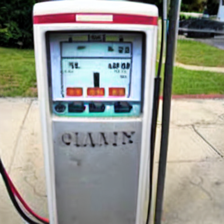} & \includegraphics[width=0.07\textwidth]{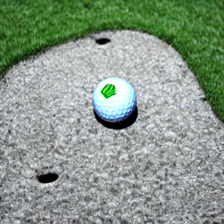} & \includegraphics[width=0.07\textwidth]{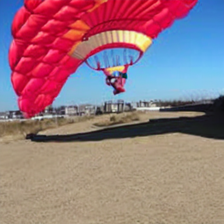} \\
Chain saw        & \includegraphics[width=0.07\textwidth]{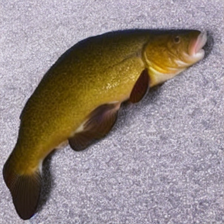} & \includegraphics[width=0.07\textwidth]{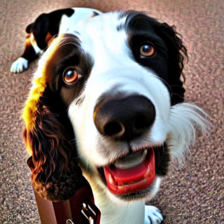} & \includegraphics[width=0.07\textwidth]{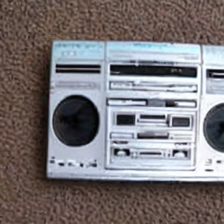} & \includegraphics[width=0.07\textwidth]{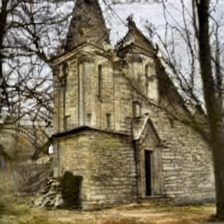} & \includegraphics[width=0.07\textwidth]{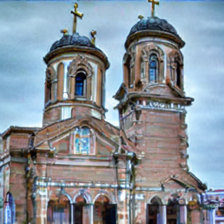} & \includegraphics[width=0.07\textwidth]{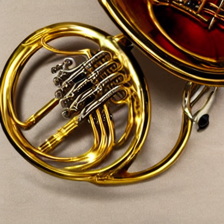} & \includegraphics[width=0.07\textwidth]{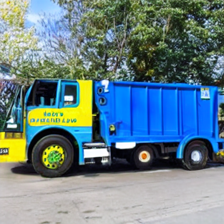} & \includegraphics[width=0.07\textwidth]{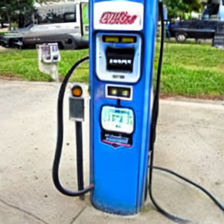} & \includegraphics[width=0.07\textwidth]{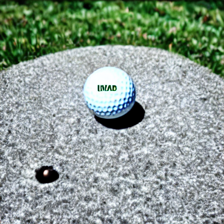} & \includegraphics[width=0.07\textwidth]{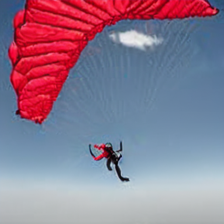} \\
Church           & \includegraphics[width=0.07\textwidth]{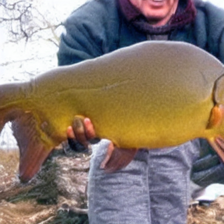} & \includegraphics[width=0.07\textwidth]{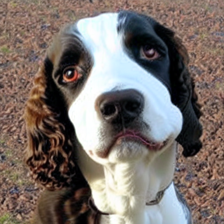} & \includegraphics[width=0.07\textwidth]{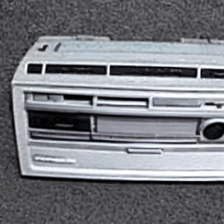} & \includegraphics[width=0.07\textwidth]{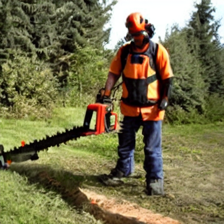} & \includegraphics[width=0.07\textwidth]{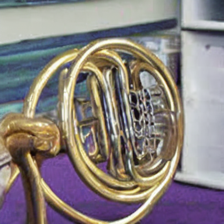} & \includegraphics[width=0.07\textwidth]{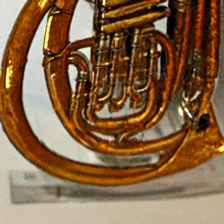} & \includegraphics[width=0.07\textwidth]{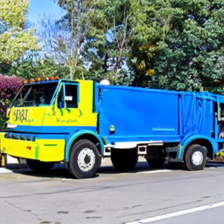} & \includegraphics[width=0.07\textwidth]{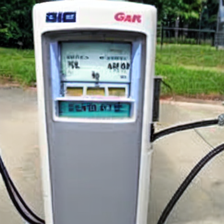} & \includegraphics[width=0.07\textwidth]{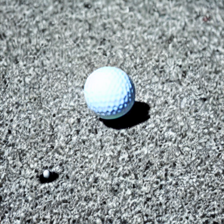} & \includegraphics[width=0.07\textwidth]{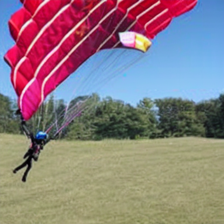} \\
French horn      & \includegraphics[width=0.07\textwidth]{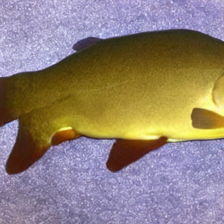} & \includegraphics[width=0.07\textwidth]{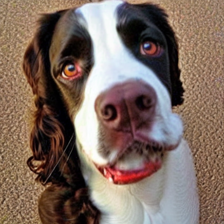} & \includegraphics[width=0.07\textwidth]{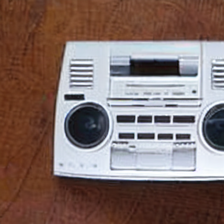} & \includegraphics[width=0.07\textwidth]{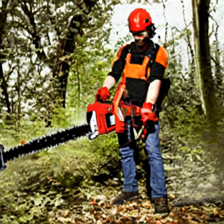} & \includegraphics[width=0.07\textwidth]{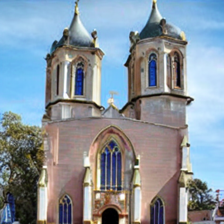} & \includegraphics[width=0.07\textwidth]{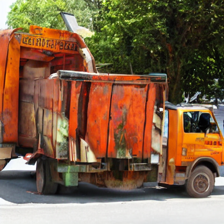} & \includegraphics[width=0.07\textwidth]{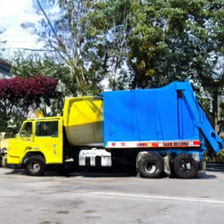} & \includegraphics[width=0.07\textwidth]{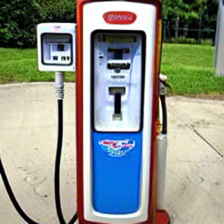} & \includegraphics[width=0.07\textwidth]{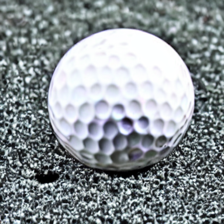} & \includegraphics[width=0.07\textwidth]{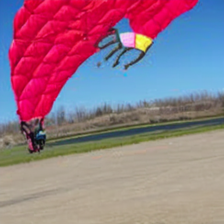} \\
Garbage truck    & \includegraphics[width=0.07\textwidth]{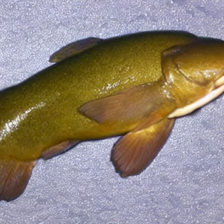} & \includegraphics[width=0.07\textwidth]{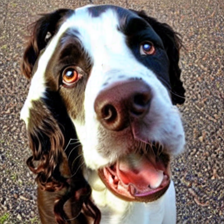} & \includegraphics[width=0.07\textwidth]{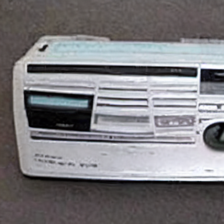} & \includegraphics[width=0.07\textwidth]{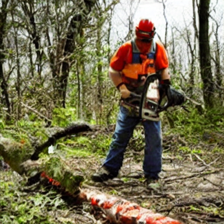} & \includegraphics[width=0.07\textwidth]{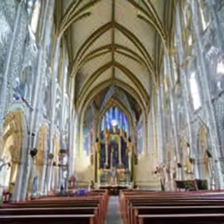} & \includegraphics[width=0.07\textwidth]{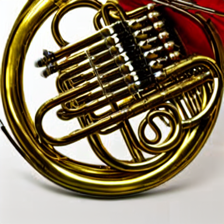} & \includegraphics[width=0.07\textwidth]{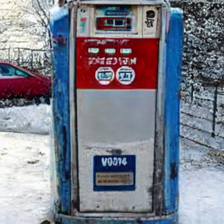} & \includegraphics[width=0.07\textwidth]{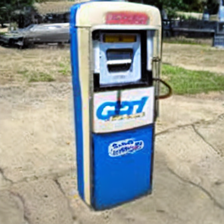} & \includegraphics[width=0.07\textwidth]{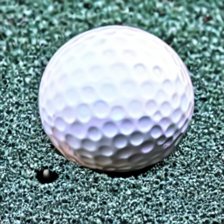} & \includegraphics[width=0.07\textwidth]{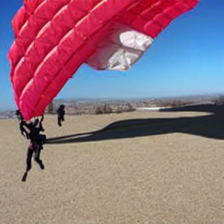} \\
Gas pump         & \includegraphics[width=0.07\textwidth]{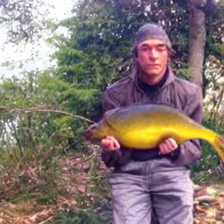} & \includegraphics[width=0.07\textwidth]{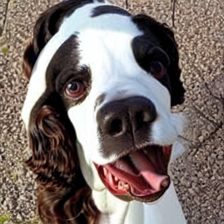} & \includegraphics[width=0.07\textwidth]{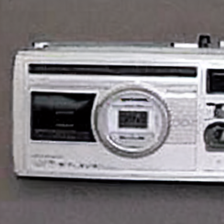} & \includegraphics[width=0.07\textwidth]{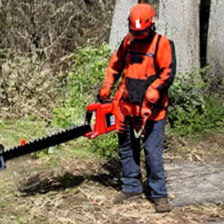} & \includegraphics[width=0.07\textwidth]{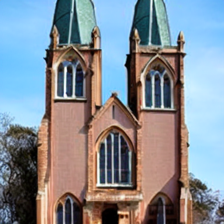} & \includegraphics[width=0.07\textwidth]{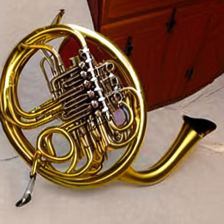} & \includegraphics[width=0.07\textwidth]{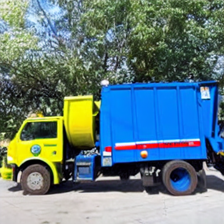} & \includegraphics[width=0.07\textwidth]{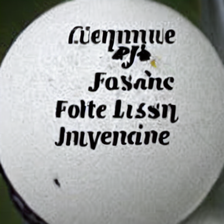} & \includegraphics[width=0.07\textwidth]{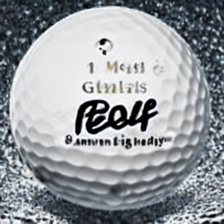} & \includegraphics[width=0.07\textwidth]{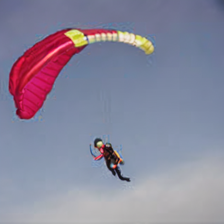} \\
Golf ball        & \includegraphics[width=0.07\textwidth]{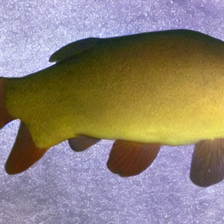} & \includegraphics[width=0.07\textwidth]{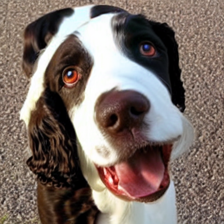} & \includegraphics[width=0.07\textwidth]{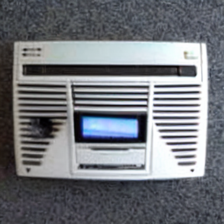} & \includegraphics[width=0.07\textwidth]{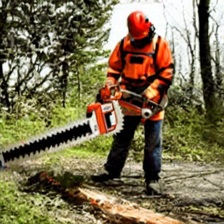} & \includegraphics[width=0.07\textwidth]{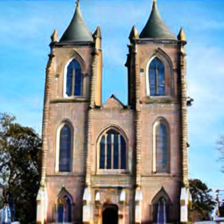} & \includegraphics[width=0.07\textwidth]{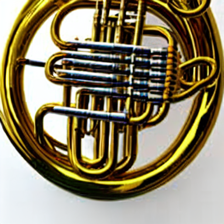} & \includegraphics[width=0.07\textwidth]{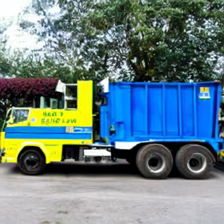} & \includegraphics[width=0.07\textwidth]{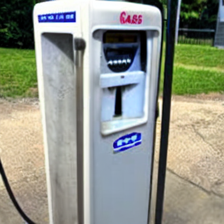} & \includegraphics[width=0.07\textwidth]{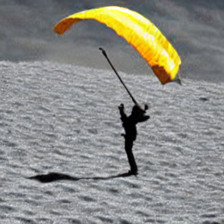} & \includegraphics[width=0.07\textwidth]{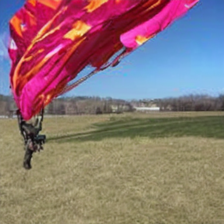} \\
Parachute        & \includegraphics[width=0.07\textwidth]{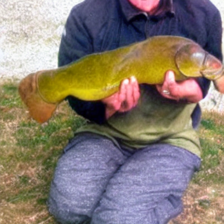} & \includegraphics[width=0.07\textwidth]{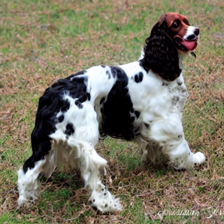} & \includegraphics[width=0.07\textwidth]{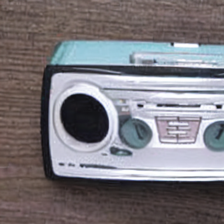} & \includegraphics[width=0.07\textwidth]{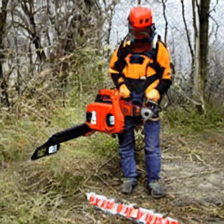} & \includegraphics[width=0.07\textwidth]{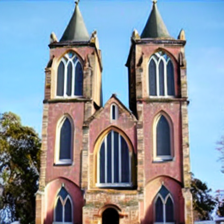} & \includegraphics[width=0.07\textwidth]{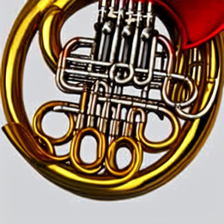} & \includegraphics[width=0.07\textwidth]{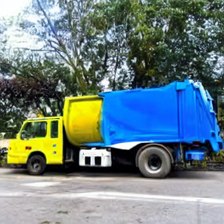} & \includegraphics[width=0.07\textwidth]{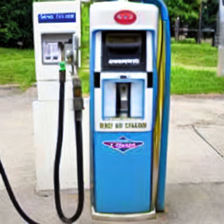} & \includegraphics[width=0.07\textwidth]{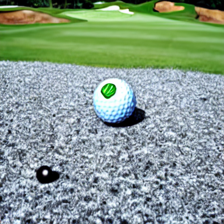} & \includegraphics[width=0.07\textwidth]{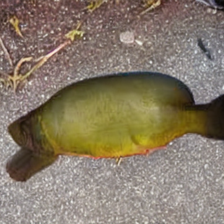} \\
\bottomrule
\end{tabular}
}
\end{figure*}

\begin{figure*}[t]
\centering
\renewcommand{\arraystretch}{1.1}
\setlength{\tabcolsep}{2pt}
\caption{Examples of generated images using \texttt{SSU}. From the rows below, diagonal images represent the forgetting class, while non-diagonal images represent the remaining class.}
\label{fig:imagenette_grid_3}
\resizebox{\textwidth}{!}{\begin{tabular}{c|cccccccccc}
\toprule
\textbf{Unlearned} & \multicolumn{10}{c}{\textbf{Prompt class}} \\
\textbf{class} & Tench & springer & Cassette & Saw & Church & French horn & truck & Gas pump & Golf ball & Parachute \\
\midrule
Tench            & \includegraphics[width=0.07\textwidth]{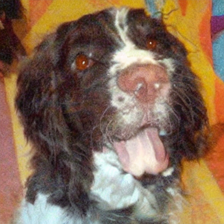} & \includegraphics[width=0.07\textwidth]{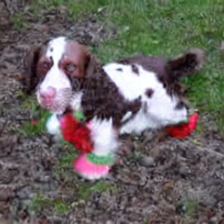} & \includegraphics[width=0.07\textwidth]{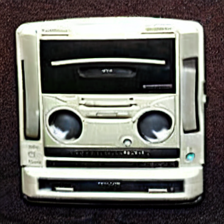} & \includegraphics[width=0.07\textwidth]{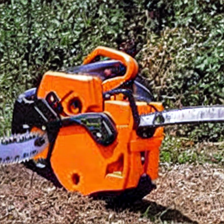} & \includegraphics[width=0.07\textwidth]{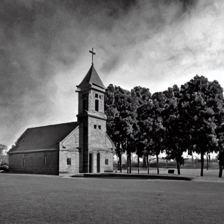} & \includegraphics[width=0.07\textwidth]{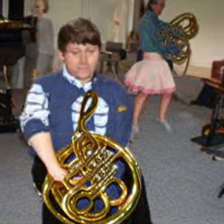} & \includegraphics[width=0.07\textwidth]{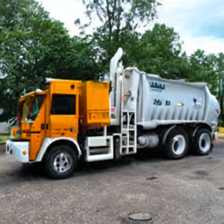} & \includegraphics[width=0.07\textwidth]{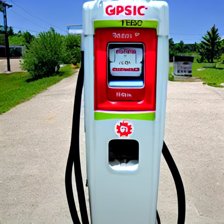} & \includegraphics[width=0.07\textwidth]{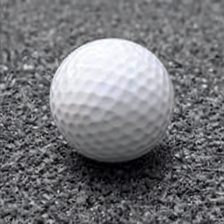} & \includegraphics[width=0.07\textwidth]{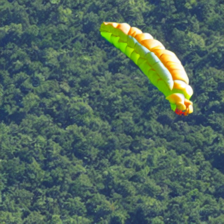} \\
English springer & \includegraphics[width=0.07\textwidth]{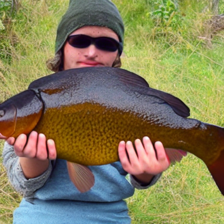} & \includegraphics[width=0.07\textwidth]{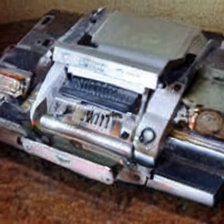} & \includegraphics[width=0.07\textwidth]{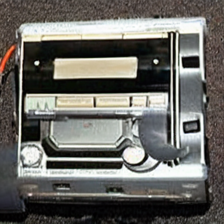} & \includegraphics[width=0.07\textwidth]{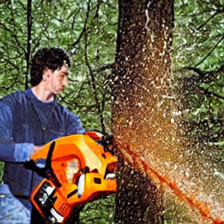} & \includegraphics[width=0.07\textwidth]{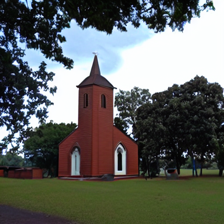} & \includegraphics[width=0.07\textwidth]{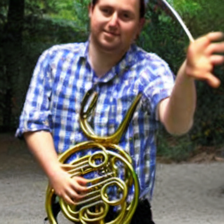} & \includegraphics[width=0.07\textwidth]{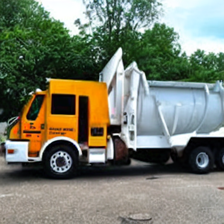} & \includegraphics[width=0.07\textwidth]{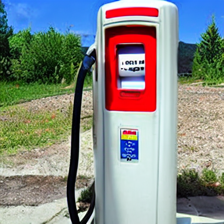} & \includegraphics[width=0.07\textwidth]{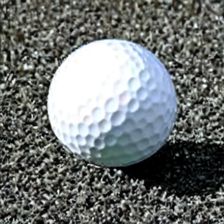} & \includegraphics[width=0.07\textwidth]{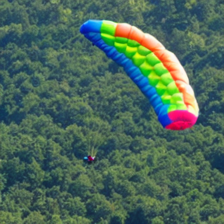} \\
Cassette player  & \includegraphics[width=0.07\textwidth]{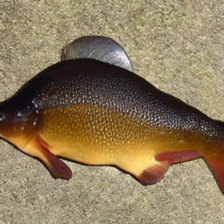} & \includegraphics[width=0.07\textwidth]{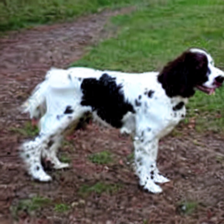} & \includegraphics[width=0.07\textwidth]{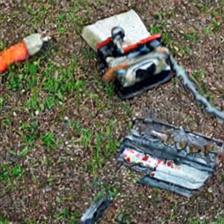} & \includegraphics[width=0.07\textwidth]{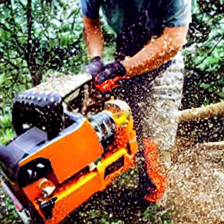} & \includegraphics[width=0.07\textwidth]{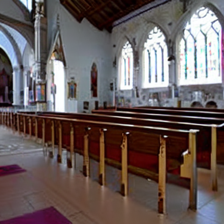} & \includegraphics[width=0.07\textwidth]{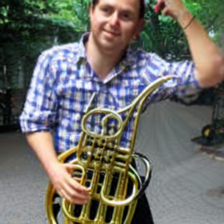} & \includegraphics[width=0.07\textwidth]{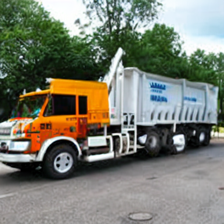} & \includegraphics[width=0.07\textwidth]{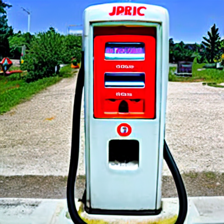} & \includegraphics[width=0.07\textwidth]{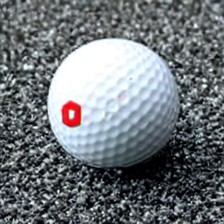} & \includegraphics[width=0.07\textwidth]{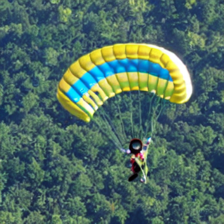} \\
Chain saw        & \includegraphics[width=0.07\textwidth]{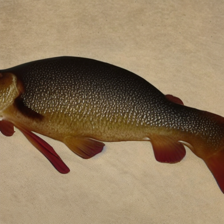} & \includegraphics[width=0.07\textwidth]{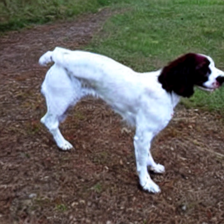} & \includegraphics[width=0.07\textwidth]{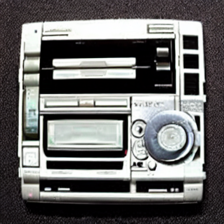} & \includegraphics[width=0.07\textwidth]{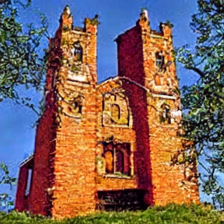} & \includegraphics[width=0.07\textwidth]{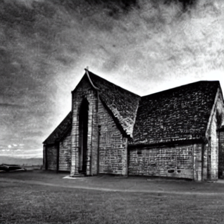} & \includegraphics[width=0.07\textwidth]{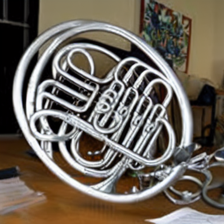} & \includegraphics[width=0.07\textwidth]{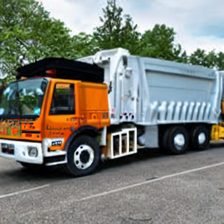} & \includegraphics[width=0.07\textwidth]{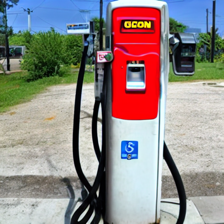} & \includegraphics[width=0.07\textwidth]{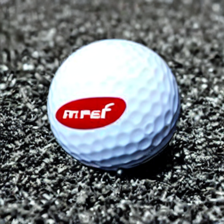} & \includegraphics[width=0.07\textwidth]{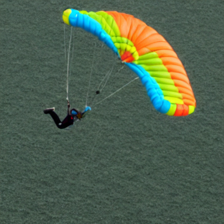} \\
Church           & \includegraphics[width=0.07\textwidth]{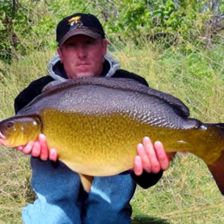} & \includegraphics[width=0.07\textwidth]{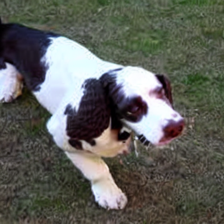} & \includegraphics[width=0.07\textwidth]{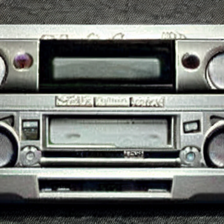} & \includegraphics[width=0.07\textwidth]{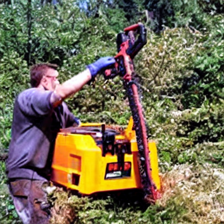} & \includegraphics[width=0.07\textwidth]{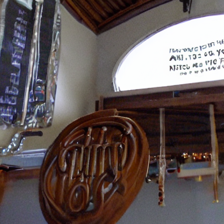} & \includegraphics[width=0.07\textwidth]{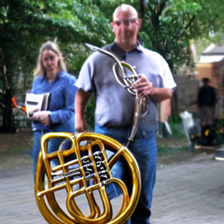} & \includegraphics[width=0.07\textwidth]{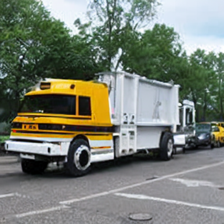} & \includegraphics[width=0.07\textwidth]{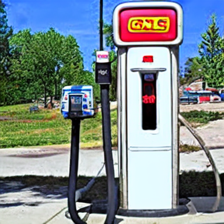} & \includegraphics[width=0.07\textwidth]{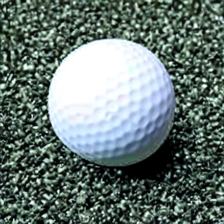} & \includegraphics[width=0.07\textwidth]{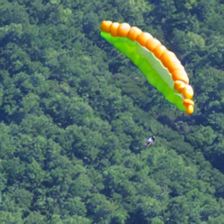} \\
French horn      & \includegraphics[width=0.07\textwidth]{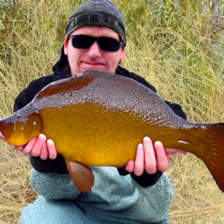} & \includegraphics[width=0.07\textwidth]{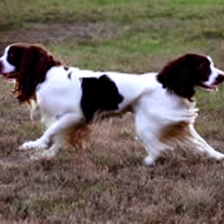} & \includegraphics[width=0.07\textwidth]{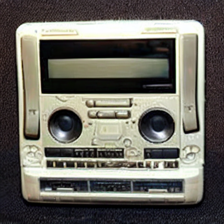} & \includegraphics[width=0.07\textwidth]{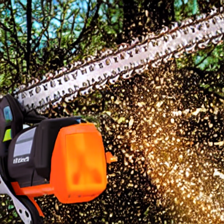} & \includegraphics[width=0.07\textwidth]{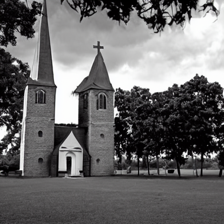} & \includegraphics[width=0.07\textwidth]{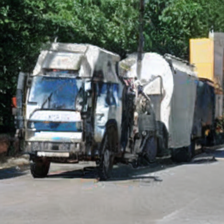} & \includegraphics[width=0.07\textwidth]{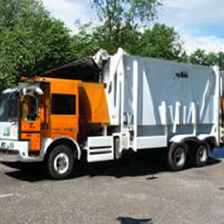} & \includegraphics[width=0.07\textwidth]{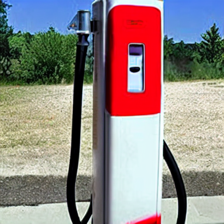} & \includegraphics[width=0.07\textwidth]{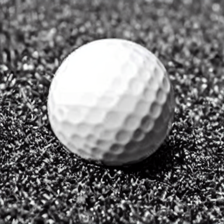} & \includegraphics[width=0.07\textwidth]{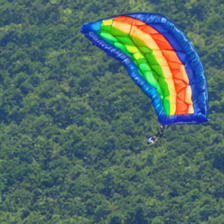} \\
Garbage truck    & \includegraphics[width=0.07\textwidth]{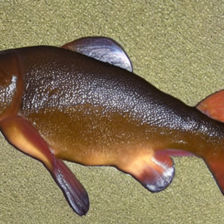} & \includegraphics[width=0.07\textwidth]{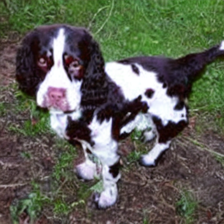} & \includegraphics[width=0.07\textwidth]{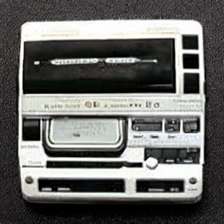} & \includegraphics[width=0.07\textwidth]{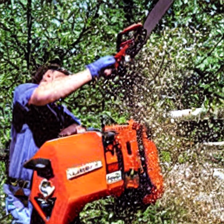} & \includegraphics[width=0.07\textwidth]{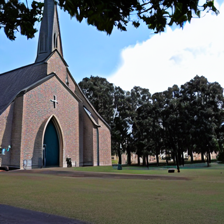} & \includegraphics[width=0.07\textwidth]{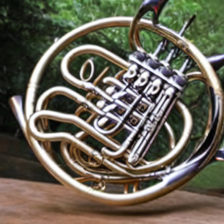} & \includegraphics[width=0.07\textwidth]{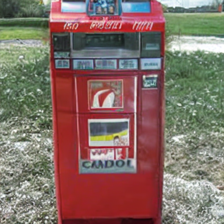} & \includegraphics[width=0.07\textwidth]{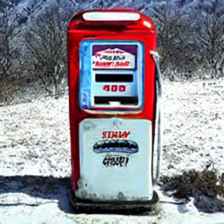} & \includegraphics[width=0.07\textwidth]{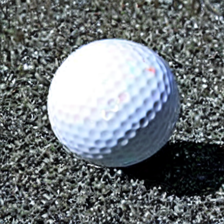} & \includegraphics[width=0.07\textwidth]{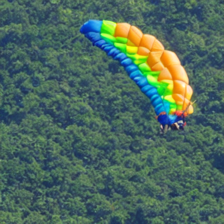} \\
Gas pump         & \includegraphics[width=0.07\textwidth]{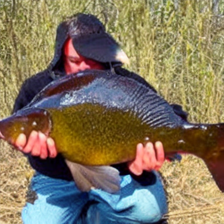} & \includegraphics[width=0.07\textwidth]{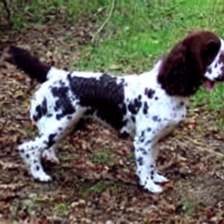} & \includegraphics[width=0.07\textwidth]{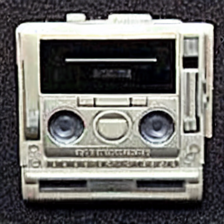} & \includegraphics[width=0.07\textwidth]{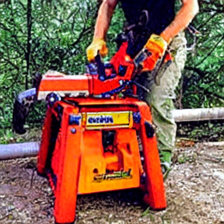} & \includegraphics[width=0.07\textwidth]{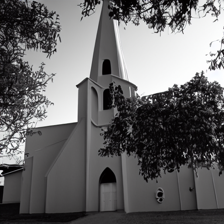} & \includegraphics[width=0.07\textwidth]{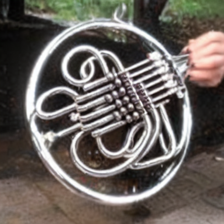} & \includegraphics[width=0.07\textwidth]{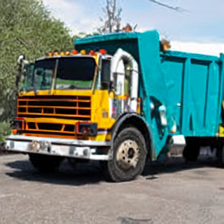} & \includegraphics[width=0.07\textwidth]{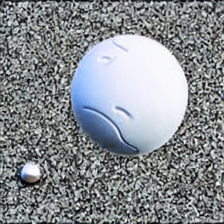} & \includegraphics[width=0.07\textwidth]{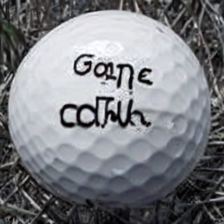} & \includegraphics[width=0.07\textwidth]{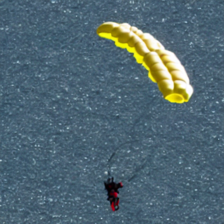} \\
Golf ball        & \includegraphics[width=0.07\textwidth]{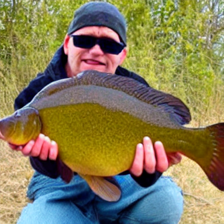} & \includegraphics[width=0.07\textwidth]{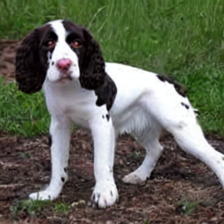} & \includegraphics[width=0.07\textwidth]{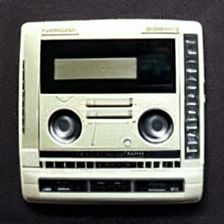} & \includegraphics[width=0.07\textwidth]{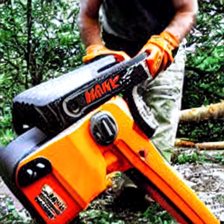} & \includegraphics[width=0.07\textwidth]{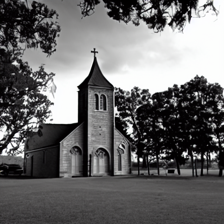} & \includegraphics[width=0.07\textwidth]{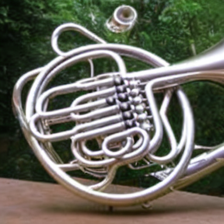} & \includegraphics[width=0.07\textwidth]{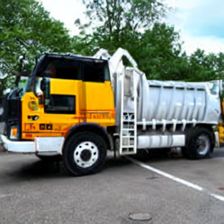} & \includegraphics[width=0.07\textwidth]{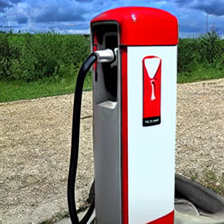} & \includegraphics[width=0.07\textwidth]{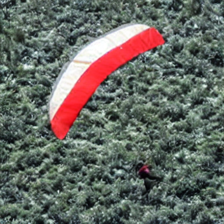} & \includegraphics[width=0.07\textwidth]{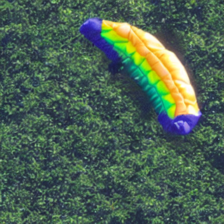} \\
Parachute        & \includegraphics[width=0.07\textwidth]{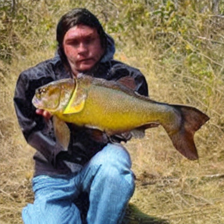} & \includegraphics[width=0.07\textwidth]{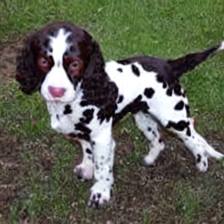} & \includegraphics[width=0.07\textwidth]{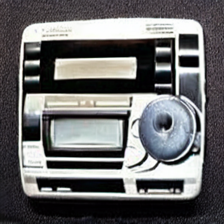} & \includegraphics[width=0.07\textwidth]{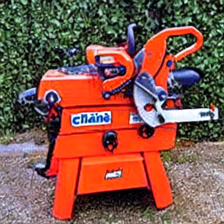} & \includegraphics[width=0.07\textwidth]{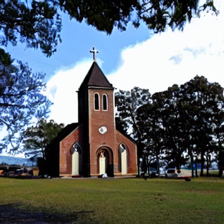} & \includegraphics[width=0.07\textwidth]{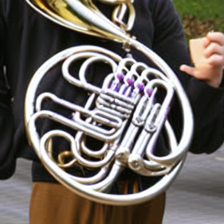} & \includegraphics[width=0.07\textwidth]{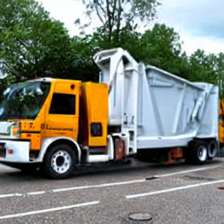} & \includegraphics[width=0.07\textwidth]{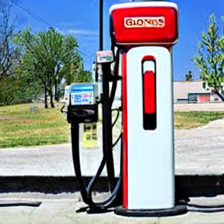} & \includegraphics[width=0.07\textwidth]{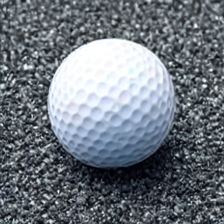} & \includegraphics[width=0.07\textwidth]{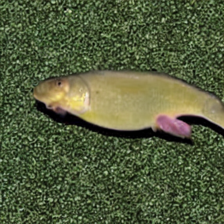} \\
\bottomrule
\end{tabular}
}
\end{figure*}

\figref{fig:imagenette_grid_1}, \figref{fig:imagenette_grid_2}, and \figref{fig:imagenette_grid_3}~present class-wise unlearning results on the Imagenette dataset using the SalUn method, evaluated under different random seeds. Each figure is organized as a matrix, where rows indicate the ``Unlearned class'' and columns indicate the ``Prompt class,'' clearly separating the intended unlearning target from the generated outputs. Diagonal images correspond to the class being unlearned, highlighting the effectiveness of SSU in removing specific concepts. Off-diagonal images show generations for other classes, demonstrating the model's ability to generalize and distinguish among the remaining categories.

\end{document}